\documentclass{article} 
\usepackage{iclr2027_conference,times}

\usepackage{amsmath,amsfonts,bm}

\def\eqref#1{equation~\ref{#1}}

\def\1{\bm{1}}

\DeclareMathAlphabet{\mathsfit}{\encodingdefault}{\sfdefault}{m}{sl}
\SetMathAlphabet{\mathsfit}{bold}{\encodingdefault}{\sfdefault}{bx}{n}

\newcommand{\R}{\mathbb{R}}

\DeclareMathOperator{\Tr}{Tr}

\newcommand{\be}{\boldsymbol{e}}
\newcommand{\bff}{\boldsymbol{f}}
\newcommand{\bg}{\boldsymbol{g}}

\newcommand{\bw}{\boldsymbol{w}}
\newcommand{\bx}{\boldsymbol{x}}

\newcommand{\bdelta}{\bm{\delta}}

\newcommand{\btheta}{\boldsymbol{\theta}}

\newcommand{\bxi}{\bm{\xi}}

\newcommand{\cO}{\mathcal{O}}

\usepackage[utf8]{inputenc} 
\usepackage[T1]{fontenc}    
\usepackage{titletoc}
\usepackage[pagebackref]{hyperref}   
\usepackage{url}            
\usepackage{booktabs}       
\usepackage{amsfonts}       
\usepackage{nicefrac}       
\usepackage{microtype}      
\usepackage{xcolor}         
\usepackage{wrapfig}
\usepackage[capitalize,noabbrev]{cleveref}
\usepackage{lipsum}

\usepackage[ruled,vlined]{algorithm2e}

\hypersetup{colorlinks=true,linkcolor=red!70!black,linktocpage=false,citebordercolor=blue!70!black,citecolor=blue!70!black,anchorcolor=blue!70!black}

\usepackage{graphicx,subfig}
\usepackage{etoc}
\usepackage{multirow}
\usepackage{makecell}
\usepackage{stfloats}
\usepackage{tcolorbox}
\usepackage{framed}
\colorlet{shadecolor}{orange!15}

\usepackage{amsmath}
\usepackage{amssymb}
\usepackage{mathtools}
\usepackage{amsthm}

\theoremstyle{plain}
\newtheorem{theorem}{Theorem}[section]
\newtheorem{proposition}{Proposition}[section]
\newtheorem{lemma}{Lemma}[section]

\theoremstyle{definition}

\newtheorem*{main result}{Main Theorem}
\allowdisplaybreaks[4]

\usepackage{enumitem}

\usepackage{wrapfig}

\newcommand{\norm}[1]{\left\lVert#1\right\rVert}

\usepackage{enumitem}

\title{How Does Local Landscape Geometry Evolve in Language Model Pre-Training?}

\author{Zhanpeng Zhou\thanks{Equal contribution. Correspondence to Zhanpeng Zhou (\texttt{zzp1012@sjtu.edu.cn}). 
}$^{~~ 1}$, 
Yuhan Sun$^{*1}$, 
Bingrui Li$^{2}$,
Jinbo Wang$^{3}$,
Huaijin Wu$^{1}$,\\
\textbf{Lei Wu$^{3}$},
\textbf{Junchi Yan$^{1}$}
\\
$^1$Shanghai Jiao Tong University
$^2$Tsinghua University 
$^3$ Peking University
}

\iclrfinalcopy 
\begin{document}

\maketitle

\begin{abstract}
The scale and expense of pre-training language models make efficient hyperparameter tuning essential, yet a principled guidance is still missing. 
In this work, we analyze language model pre-training dynamics from a local landscape geometry perspective.
Our study reveals two distinct phases.
In Phase I, sharpness of the local landscape is initially high, leading to instability and loss plateaus under large learning rates (LRs).
The landscape shifts from sharp to flatter regions early in training. 
This dynamic explains the necessity of LR warmup and further suggests that larger peak LRs require proportionally longer warmup periods. 
In Phase II, the local landscape is governed by the gradient noise scale.
Our theory identifies a depth flatness trade-off: high noise from smaller batches widens the loss basin, whereas reduced noise from larger batches deepens it. 
This theory motivates a dynamic batch-size (BS) scheduler that begins with a small BS and increases it late in training. 
Together, we provide a unified view of loss landscape evolution, which translates into actionable tuning strategies for large-scale pre-training.
\end{abstract}

\section{Introduction}

Training language models efficiently requires carefully tuned hyperparameters, yet a principled guidance for tuning remains unclear. While practitioners often rely on grid search, these approaches are costly and unreliable at scale. Recent research~\citep{foret2021sharpnessaware,cohen2021gradient,gilmer2022a} has highlighted that the geometry of the local loss landscape offers fundamental insights into optimization, revealing how factors such as sharpness\footnote{To avoid misunderstanding, we clarify the terminology in \cref{tab:term}.} ~\citep{keskar2017on,zhang2017understanding,Jiang2020Fantastic} shape training dynamics.
Leveraging insights from the local loss landscape presents a promising path toward principled hyperparameter tuning for language model pre-training.

Several pioneering works have already attempted to study language models from the local loss landscape perspective.
\citet{zhang2024why,wang2025the} identified blockwise sharpness patterns in language models through Hessian-based analyses.
\cite{wen2024understanding} introduced the ``river-valley'' landscape to explain the effectiveness of Warmup-Stable-Decay (WSD) schedules~\citep{hu2024minicpm}.
\citet{peng2024navigating,chen2025understanding} further visualized the loss landscapes of finetuned language models, offering geometric insights into the safety alignment.

To this end, we pose the central questions of this paper:
\begin{itemize}[leftmargin=2.5em,topsep=1pt,itemsep=1.5pt,partopsep=1.5pt, parsep=1.5pt]
    \item[\textbf{{\color{black} 1.}}] \emph{How does the local landscape geometry evolve in language model pre-training?}
    \item[\textbf{{\color{black} 2.}}] \emph{What implications does this evolution have for principled hyperparameter tuning?}
\end{itemize}

\textbf{Our contributions.}
In this work, we present the systematic study of the evolution of local landscape geometry during language model pre-training. 
As illustrated in \cref{fig:fig1}, our analysis reveals two distinct phases, each with implications for hyperparameter tuning. 

\textbullet~\emph{Phase I: From Sharp to Flat Landscapes.} 
In Phase I, we observe that the model shifts from sharper regions of the loss landscape toward flatter ones, contrary to the progressive sharpening phenomenon in prior works~\citep{cohen2021gradient,song2023trajectory,cohen2025understanding}.
Lyapunov stability analysis in \cref{sec:early} shows that the maximum stable learning rate (LR) is inversely proportional to sharpness.
Since sharpness is extremely high early in pre-training, using large peak LRs without sufficient warmup leads to instabilities, such as loss spikes and plateaus (see \cref{fig:instability and loss plateau}). 

\textbf{Implications.}
The sharp-to-flat transition explains the necessity of LR warmup: LR should remain small until sharpness has sufficiently decayed, preventing training instabilities. 
This further provides a tuning recipe: within a reasonable range, larger peak LRs require proportionally longer warmup, to safely navigate the sharpest stage of training.

\textbullet~\emph{Phase II: Basin Selection Governed by Noise Scale.} 
In Phase II, the local loss landscape is largely governed by the noise scale during training, with batch size (BS) serving as its primary controller. 
Our analysis shows that smaller BS widens the loss basin, while larger BS deepens it.
Theoretically, we analyze a continuous setup of preconditioned SGD, which uncovers a depth flatness trade-off: reduced gradient noise tends to minimize the loss, leading to deeper minima; whereas increased noise tends to regularize the sharpness of landscape, moving toward wider ones.

\textbf{Implications.}
The trade-off, together with the ramping-time experiment in \cref{fig:timing}, motivates a BS scheduling strategy: begins with a small BS and ramps it until the late phase of training.
Our scheduling ensures steady loss reduction with few token consumption. 
Moreover, since the noise scale is proportional to $\eta / B$ in our theory, we predict that BS ramping and LR decay reduce the noise scale in similar ways and thus yield comparable performance (see \cref{fig:discussion}).

In summary, our work provides a two-phase picture of pre-training: an early sharp-to-flat transition that necessitates LR warmup, and a late noise-driven regime that motivates BS scheduling. 
This unified view advances our understanding of pre-training dynamics.

 \begin{figure}[tb!]
    \centering
    \includegraphics[width=0.80\textwidth]{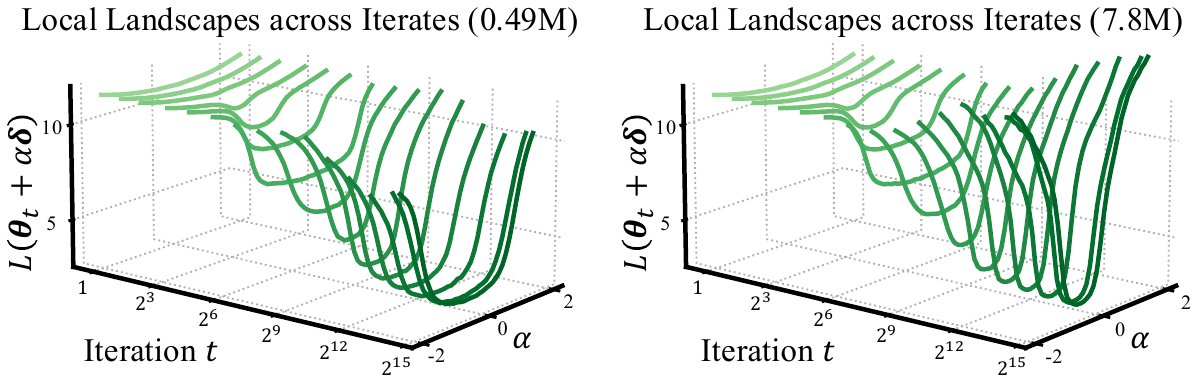}
    \caption{
    \textbf{The evolution of local loss landscape throughout pre-training.}
    We train \texttt{LLaMA-2} models with $170$M parameters using different BSs ($0.49$M and $7.8$M), and visualize the one-dimensional loss landscape at iterate $\btheta_t$ along a random direction $\bdelta$, i.e., plot $L(\btheta_t + \alpha \bdelta)$ vs. the perturbation coefficient $\alpha$.
    The landscapes are shown across different training iterations $t$.
    \textbf{Phase I.} The landscapes gradually widen for both training runs.
    \textbf{Phase II.} Training with smaller BS produces wider landscapes than training with larger BS.
    }
    \label{fig:fig1}
\end{figure}

\section{Related Works}

\textbf{Local landscape geometry evolution.}
Understanding how local landscape geometry, particularly sharpness, evolves during training has drawn attention before the success of large language models.
\citet{wu2018sgd,cohen2022adaptive,song2023trajectory,cohen2025understanding} showed that initially gradient descent (GD) tends to move from flatter to sharper regions of the landscape.
In addition, \citet{jastrzębski2018on,Jastrzebski2020The} argued that in SGD, sharpness also changes monotonically but either increase or decrease depending on the setting.
In the later phase, however, sharpness is largely governed by the properties of the optimizer~\citep{zhou2025sharpnessaware}.
One notable example is that the stochastic noise of SGD and its variants implicitly biases training toward flat minima~\citep{wu2018sgd,pmlr-v97-zhu19e,xie2021a,wu2022the}.
Yet, these findings are largely restricted to small networks;
\emph{In comparison}, our work presents a \emph{systematic study} of how local landscape geometry evolves in large-scale language model pre-training.

\textbf{Large-scale pre-training: learning rate warmup.}
Learning rate warmup, first introduced in large-batch ResNet~\citep{he2016deep,goyal2017accurate} and Transformer training~\citep{vaswani2017attention}, is now standard in large-scale pre-training~\citep{shoeybi2019megatron,zhang2022opt,hu2024minicpm}.
Its mechanism, however, remains poorly understood.
\citet{gotmare2018a} showed that warmup prevents excessively large early parameter updates;
\citet{bergsma2025straight} attributed the early updates to bias reduction.
\citet{kosson2024analyzing} showed in language model pre-training that warmup mitigates momentum bias correction and correlated gradients that drive unstable representation shifts.
Yet no unified explanation exists.
\emph{In comparison}, our work views warmup from a geometric perspective, suggesting that larger peak LRs require proportionally longer warmup.

\textbf{Large-scale pre-training: batch size schedules.}
Batch size is another critical hyperparameter in large-scale pre-training, controlling the trade-off between time efficiency and data efficiency.
Prior works~\citep{mccandlish2018empirical,kaplan2020scaling,gray2023efficient,gray2024normalization,zhang2024does} have focused on the critical batch size (CBS), the point where further increasing BS yields diminishing returns.
However, CBS is usually considered as constant throughout training, and less attention has been given to BS scheduling.
Early works on adaptive sampling proposed gradually increasing BS to balance efficiency and noise reduction~\citep{de2017automated,lau2024adadagrad,lau2024communication,lau2024adaptive,ostroukhov2024adabatchgrad}.
Advanced language models~\citep{brown2020language,touvron2023llama,liu2024deepseek,li2025minimax} employed stage-wise BS schedules, but without systematic analysis.
\emph{In contrast}, our work connects BS scheduling to the evolving local loss landscape, providing a principled way for when to increase the BS.

\section{Preliminaries}\label{sec:prelim}

\textbf{Basic notations.}
We use bold lowercase letters (e.g., $\bx = (x_i)$) to denote vectors and bold uppercase letters (e.g., $\mathbf{A} = (a_{ij})$) to denote matrices.
For a matrix $\mathbf{A}$, let $\norm{\mathbf{A}}_2$, $\norm{\mathbf{A}}_F$, and $\Tr(\mathbf{A})$ denote its spectral norm, Frobenius norm and trace, respectively.
The Hadamard product is denoted by $\odot$.

\textbf{Theoretical setup.}
Our theory focuses on the preconditioned stochastic gradient descent (PSGD).
We consider a model with parameters $\btheta \in \R^p$ and a training set of $n$ examples.
Let $L_i(\btheta)$ be the fitting error evaluated at the $i$-th example and $L(\btheta) = \frac{1}{n}\sum_{i=1}^n L_i(\btheta)$ be the empirical risk.
We analyze the preconditioned SGD with a fixed positive-definite\footnote{Most practical preconditioners are positive-definite: $\mathbf{M} = I$ for SGD, diagonal $\mathbf{M}$ for AdaGrad~\citep{JMLR:v12:duchi11a}, RMSProp~\citep{tieleman2012rmsprop}, Adam, etc.} preconditioner $\mathbf{M} \succ 0$.
At iteration $k$, the update rule gives: \begin{align}\label{eq:noisy dynamics}
    \btheta_{k+1} = \btheta_k - \eta \mathbf{M} (\nabla L(\btheta_k) + \bxi_k),
\end{align}
where $\eta > 0$ is the LR and $\{\bxi_k\}$ are i.i.d. random noise vectors with \begin{align}
    \mathbb{E}[\bxi_k] = \boldsymbol{0}, \quad \mathbb{E}[\bxi_k\bxi_k^\top] = \mathbf{\Sigma}(\btheta_k)/B.
\end{align}
Note that $\mathbf{\Sigma}(\btheta_k)=\frac{1}{n}\sum_{i=1}^n \nabla L_i(\btheta_k) \nabla L_i(\btheta_k)^{\top} - \nabla L(\btheta_k) \nabla L(\btheta_k)^{\top} $ is the gradient covariance at $\btheta_k$, and $B$ denotes the BS.
During the late phase of training, the model remains close to some global minimum $\btheta^\star$ and the loss can be approximated quadratically: \begin{align}\label{eq:quadratic approximation}
    L(\btheta) &= L(\btheta^\star) + \frac{1}{2}{(\btheta - \btheta^\star)}^\top \mathbf{H}(\btheta^\star) {(\btheta - \btheta^\star)}, \quad \mathbf{H}(\btheta^\star) := \nabla^2 L(\btheta^\star) \succ 0.
\end{align}
Similar formulations have been widely used in dynamical stability analyses~\citep{wu2018sgd,cohen2021gradient,zhou2025sharpnessaware} and theoretical advances on BS scaling~\citep{mccandlish2018empirical}.

\textbf{Experimental setup.}
Our experiments are mainly conducted on \texttt{LLaMA-2} architecture~\citep{touvron2023llama} models with $93$M and $170$M parameters.
Training is performed on the \texttt{FineWeb-Edu} dataset~\citep{penedo2024the}, with sufficient training budgets ranging from $50$ to $1000$ tokens-per-parameter (TPP)\footnote{At least 10$\times$ over Chinchilla-optimal tokens~\citep{hoffmann2022training}.} and a context length of $1024$.
We adopt AdamW~\citep{kingma2014adam} with hyperparameters $\beta_1 = 0.95$, $\beta_2 = 0.95$, and weight decay $0.1$, together with gradient clipping at $1.0$ for stability.
The evaluation is conducted on a held-out validation split of approximately $50$M tokens.
More experiments on larger scales, other architectures, and optimizers are deferred to \cref{suppl:exp}.

Our experiments vary the LRs and BSs.
In \cref{sec:early}, we primarily study the role of LR and warmup length, fixing BS at $7.8$M.
In \cref{sec:late}, we focus on the effect of BS, with LR fixed at $2^{-10}$.
To decouple BS ramping from LR decay, we adopt a \emph{warmup-stable} schedule: after linear warmup to the peak value, the LR remains constant (similar to WSD~\citep{hu2024minicpm}, but without decay phase). 

\begin{figure}[tb!]
    \centering
    \includegraphics[width=1\textwidth]{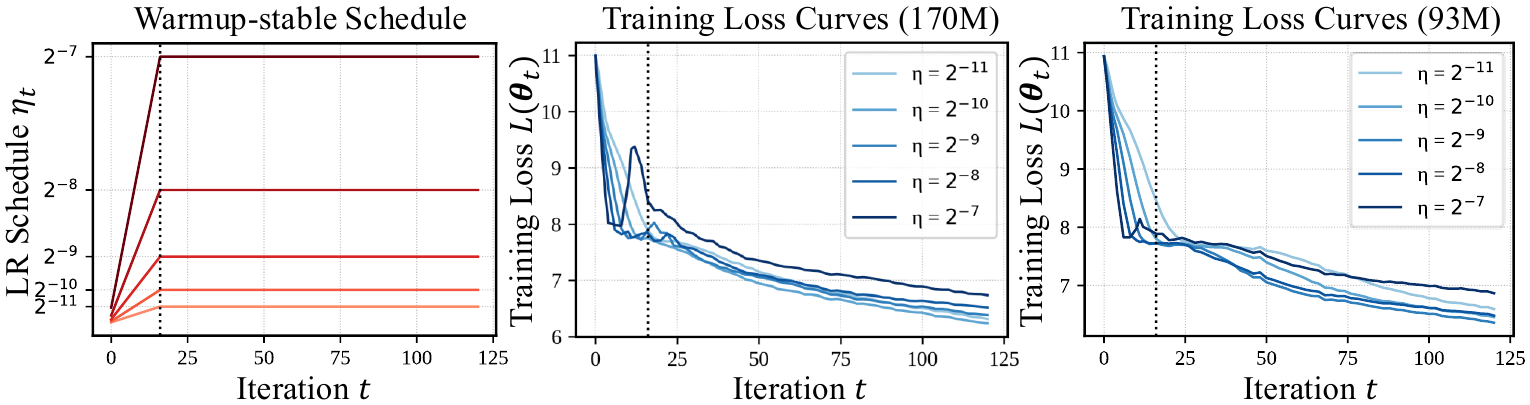}
    \caption{
    \textbf{Loss spikes and plateaus in Phase I.}
    We train a series of \texttt{LLaMA-2} models with $93$M and $170$M parameters.
    We adopt a warmup-stable schedule, where the warmup length is shortened to $16$ iterations and the peak LR is varied, $\eta \in \{2^{-11}, 2^{-10}, 2^{-9}, 2^{-8}, 2^{-7}\}$. 
    \textbf{(Left).} LR schedule: $\eta_t$ vs. training iteration $t$. 
    \textbf{(Middle, right).} Training loss curves for different model sizes: $L(\btheta_t)$ vs. training iteration $t$. 
    The vertical dashed line marks the end of the warmup phase.
    }
    \label{fig:instability and loss plateau}
    \vspace{-10pt}
\end{figure}

\section{Phase I: From Sharp to Flat Landscapes}\label{sec:early}

In this section, we provide evidence that, during Phase I, the local landscape of language models evolves from sharp regions toward flatter ones.
We first observe that training with large LRs and insufficient warmup often leads to instability and early loss plateaus. 
Via Lyapunov stability analysis, we attribute these behaviors to the sharp-to-flat dynamics.
This finding explains the necessity of LR warmup and suggests that larger peak LRs require proportionally longer warmup.

\textbf{Motivating observations: instability and loss plateaus early in training.}  
The loss curves are typically smooth initially; the model escapes from random initialization and the loss decreases rapidly.
Yet, surprisingly, when the warmup length is extremely shortened, we \emph{consistently} observe loss spikes and plateaus near the end of the warmup phase.

To demonstrate this, we train models of different sizes with a fixed warmup length of $16$ iterations while varying the peak LR.
As shown in \cref{fig:instability and loss plateau}, a loss plateau reliably appears around the end of the warmup phase across all settings.
Additionally, larger LRs produce higher spikes, which mark a characteristic feature of early training instability.
Given these results, two natural questions arise:
\begin{itemize}[leftmargin=3.5em,topsep=1pt,itemsep=1.5pt,partopsep=1.5pt, parsep=1.5pt]
    \item [\textbf{{\color{red!70!black} Q1.}}] \emph{Why does shortened warmup induce training instability?}
    \item [\textbf{{\color{red!70!black} Q2.}}] \emph{Why do spikes and plateaus occur only at the very beginning of training?}
\end{itemize}

To shed light on these questions, we analyze the dynamics of PSGD via Lyapunov stability analysis.

\textbf{Lyapunov stability analysis: sharpness matters.} 
Let $\btheta_k$, $\tilde{\btheta}_k$ be two nearby trajectories, and define their difference as $\be_k:=\tilde\btheta_k - \btheta_k$. 
When the noise term $\bxi$ is set to zero, the evolution of $\be_k$ satisfies:
\begin{align}\label{eq:linear system}
    \be_{k+1} = \be_k - \eta \mathbf{M} (\nabla L(\btheta_k + \be_k) - \nabla L(\btheta_k)) \overset{\text{(Linearization)}}{=} (\mathbf{I} - \eta \mathbf{M} \mathbf{H}(\btheta_k))\be_k,
\end{align}
The dynamics in \cref{eq:linear system} describe the local sensitivity of the iteration: if matrix $(\mathbf{I} - \eta \mathbf{M}\mathbf{H}(\btheta_k))$ repeatedly expands $\be_k$, small perturbations grow exponentially and the iterates are linearly unstable.
Intuitively, the LR $\eta$ interacts directly with the curvature of the landscape: if $\eta$ is too large relative to the sharpest direction, the update rule amplifies perturbations and leads to loss spikes.
The following lemma formalizes this stability condition for preconditioned GD.

\begin{lemma}[Stability condition for preconditioned GD]
\label{thm:stability condition}
Define the preconditioned curvature matrix $\mathbf{S}(\btheta_k) := \mathbf{M}^{1/2} \mathbf{H}(\btheta_k) \mathbf{M}^{1/2}$, and let $\{\lambda\}_{i=1}^p$ be the eigenvalues of $\mathbf{S}(\btheta_k)$. 
The linear system in \cref{eq:linear system} is asymptotically stable (i.e., $\lim_{k\to\infty} \boldsymbol{e}_k = \boldsymbol{0}$) if $\eta$ satisfies $0 < \eta < \frac{2}{\lambda_{\textup{max}}(\mathbf{S}(\btheta_k)}, \forall k \geq 0$.
\end{lemma}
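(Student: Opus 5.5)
The plan is to reduce the preconditioned iteration to a symmetric one through a change of variables, and then bound each step by a spectral-norm contraction. Since $\mathbf{M}\succ 0$, the square root $\mathbf{M}^{1/2}$ exists and is invertible. Set $\bm{u}_k := \mathbf{M}^{-1/2}\be_k$. Multiplying \cref{eq:linear system} on the left by $\mathbf{M}^{-1/2}$ gives
\begin{align*}
\bm{u}_{k+1} = \mathbf{M}^{-1/2}\bigl(\mathbf{I}-\eta\mathbf{M}\mathbf{H}(\btheta_k)\bigr)\mathbf{M}^{1/2}\bm{u}_k = \bigl(\mathbf{I}-\eta\mathbf{S}(\btheta_k)\bigr)\bm{u}_k .
\end{align*}
So $\mathbf{I}-\eta\mathbf{M}\mathbf{H}$ is similar to the \emph{symmetric} matrix $\mathbf{I}-\eta\mathbf{S}$. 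Its eigenvalues are therefore real and equal to $1-\eta\lambda_i$. Moreover, $\be_k\to\boldsymbol{0}$ if and only if $\bm{u}_k\to\boldsymbol{0}$, because $\mathbf{M}^{\pm1/2}$ are fixed bounded maps.

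Because $\mathbf{I}-\eta\mathbf{S}(\btheta_k)$ is symmetric, its spectral norm equals its spectral radius:
\begin{align*}
\rho_k := \norm{\mathbf{I}-\eta\mathbf{S}(\btheta_k)}_2 = \max_i |1-\eta\lambda_i(\mathbf{S}(\btheta_k))| .
\end{align*}
If $0<\lambda_i\le\lambda_{\max}$ and $0<\eta<2/\lambda_{\max}$, then $-1<1-\eta\lambda_{\max}\le 1-\eta\lambda_i<1$, so $\rho_k<1$. Here I use the positive-definiteness $\mathbf{H}\succ 0$ from the setup in \cref{eq:quadratic approximation}; congruence then gives $\mathbf{S}\succ 0$. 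Iterating yields
\begin{align*}
\norm{\bm{u}_k}_2\le\Bigl(\prod_{j<k}\rho_j\Bigr)\norm{\bm{u}_0}_2 .
\end{align*}
In the frozen-curvature case, which is the quadratic regime where $\mathbf{H}$ is constant, this is just $\rho^k\norm{\bm{u}_0}_2\to 0$. That settles the constant-$\mathbf{H}$ case directly.

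The main obstacle is the time-varying case. Pointwise $\rho_k<1$ does not force the product $\prod_{j<k}\rho_j$ to vanish, since $\rho_k$ could tend to $1$ fast enough for the product to stay bounded away from zero. It also fails if some $\lambda_i\le 0$, because then $1-\eta\lambda_i\ge 1$. I would therefore make explicit the implicit uniformity: assume the spectra of $\mathbf{S}(\btheta_k)$ lie in $[\mu,\Lambda]$ for all $k$, with $\mu>0$ and $\eta<2/\Lambda$. Then
\begin{align*}
\rho_k\le\rho:=\max\{|1-\eta\mu|,\,|1-\eta\Lambda|\}<1 \quad\text{for all } k,
\end{align*}
which gives geometric decay $\norm{\be_k}_2\le\norm{\mathbf{M}^{1/2}}_2\norm{\mathbf{M}^{-1/2}}_2\,\rho^k\norm{\be_0}_2$. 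Note that the symmetric conjugation is exactly what avoids the usual failure mode of products of non-normal matrices: after conjugating, the norms compose submultiplicatively with no transient growth. I would also remark that the condition is essentially sharp. If $\eta>2/\lambda_{\max}$, the corresponding eigenvector of $\mathbf{S}$ is amplified by the factor $|1-\eta\lambda_{\max}|>1$.
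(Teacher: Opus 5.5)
Your argument is correct, and its core step is the paper's: conjugating by $\mathbf{M}^{1/2}$ turns $\mathbf{I}-\eta\mathbf{M}\mathbf{H}(\btheta_k)$ into the symmetric matrix $\mathbf{I}-\eta\mathbf{S}(\btheta_k)$, whose eigenvalues are $1-\eta\lambda_j$.

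The paper stops at that point. It applies the eigenvalue test $|1-\eta\lambda_j|<1$ at each $k$ and calls it equivalent to $0<\eta<2/\lambda_{\max}$. That equivalence silently assumes every $\lambda_j>0$. The test is also only a complete argument when the iteration matrix does not depend on $k$, i.e.\ the frozen quadratic regime with $\mathbf{H}\succ 0$.

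You differ exactly at the step needed for $k$-dependent curvature. Because $\mathbf{M}$ is fixed, the single frame $\bm{u}_k=\mathbf{M}^{-1/2}\be_k$ symmetrizes every factor. Spectral norms then equal spectral radii and compose submultiplicatively. Your explicit hypothesis that all eigenvalues of $\mathbf{S}(\btheta_k)$ lie in $[\mu,\Lambda]$, with $\mu>0$ and $\eta<2/\Lambda$, then gives a uniform geometric rate.

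Those extra hypotheses are not cosmetic: the lemma as literally stated fails without them. With scalar $\mathbf{M}=1$, $\mathbf{H}(\btheta_k)=(k+2)^{-2}$ and $\eta=1$, the product telescopes and $e_k\to e_0/2$. With constant $\mathbf{H}=\mathrm{diag}(1,-1)$, $\mathbf{M}=\mathbf{I}$ and $\eta=1$, the iteration diverges.

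So the paper's version buys brevity and is adequate for its heuristic use. Yours actually proves a corrected statement. It also shows that the fixed preconditioner is essential: with an iteration-dependent $\mathbf{M}_k$ there is no common symmetrizing frame, and per-step spectral conditions no longer control the product.
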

\cref{thm:stability condition} shows that the Lyapunov stability is governed by the largest eigenvalue of $\mathbf{S}$.
If the curvature along the sharpest direction is too large, only a sufficiently small LR can prevent divergence.

\begin{figure}[tb!]
    \centering
    \includegraphics[width=0.78\textwidth]{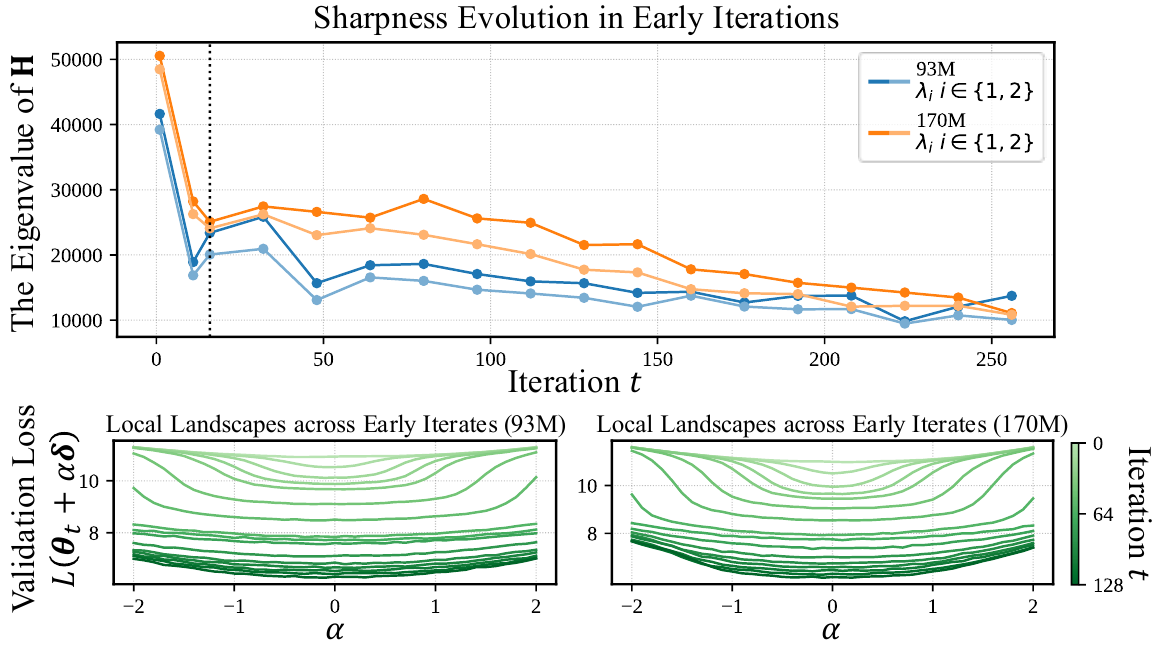}
    \caption{
    \textbf{Early pre-training shifts iterates from sharp to flat regions.}
    We visualize the local landscape geometry evolution of training runs in \cref{fig:instability and loss plateau}.
    For each model size, we select the training run with LR $2^{-10}$.
    \textbf{(Top).} Evolution of the top eigenvalues of the Hessian across iterations: $\lambda_i(\mathbf{H}(\btheta_t))$ vs. iteration $t$.
    \textbf{(Bottom).} One-dimensional loss landscape along a random perturbation direction: the perturbed loss $L(\btheta_t + \alpha \bdelta)$ vs. perturbation coefficient $\alpha$, shown across early training iterations $t$.
    }
    \label{fig:early sharpness}
\end{figure}

We next characterize the one-step loss change as $\eta$ approaches the stability boundary $2/\lambda_{\max}(\mathbf{S}_k)$.

\begin{lemma}[One-step loss change]
\label{thm:loss change}
Let $\bdelta_k := \btheta_{k+1} - \btheta_k$. Suppose that along the segment $\{\btheta_k + \alpha \bdelta_k: \alpha \in [0, 1]\}$, we have $0 \leq \lambda_{\min} (\mathbf{S}(\btheta_k + \alpha \bdelta_k)) \leq \lambda_{\max} (\mathbf{S}(\btheta_k + \alpha \bdelta_k)) \leq \Lambda_k$.
Then, \begin{align*}
    L(\btheta_{k+1}) - L(\btheta_k) \leq - \eta (1 - \frac{1}{2}\eta \Lambda_k) (\nabla L(\btheta_k))^{\top} \mathbf{M}\nabla L(\btheta_k).  
\end{align*}
In particular, if $\eta \uparrow 2/\Lambda_k$, the guaranteed decrease per step $\boxed{(L(\btheta_{k}) - L(\btheta_{k+1}))/ \eta \rightarrow 0}$.
\end{lemma}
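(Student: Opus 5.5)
The plan is to run the standard descent-lemma argument along the line segment from $\btheta_k$ to $\btheta_{k+1}$, using the noiseless preconditioned update $\bdelta_k = -\eta \mathbf{M}\nabla L(\btheta_k)$, as in the setting of \cref{thm:stability condition}. Write $\bg_k := \nabla L(\btheta_k)$. By Taylor's theorem with integral remainder,
\begin{align*}
L(\btheta_{k+1}) - L(\btheta_k) = \bg_k^\top \bdelta_k + \int_0^1 (1-\alpha)\, \bdelta_k^\top \mathbf{H}(\btheta_k + \alpha\bdelta_k)\, \bdelta_k \, d\alpha .
\end{align*}
The first-order term equals $-\eta\, \bg_k^\top \mathbf{M} \bg_k$ directly.

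For the second-order term, we rewrite it through the preconditioned curvature matrix. Substituting $\bdelta_k = -\eta \mathbf{M}\bg_k = -\eta \mathbf{M}^{1/2}(\mathbf{M}^{1/2}\bg_k)$ gives
\begin{align*}
\bdelta_k^\top \mathbf{H}\bdelta_k = \eta^2 (\mathbf{M}^{1/2}\bg_k)^\top \mathbf{S}\, (\mathbf{M}^{1/2}\bg_k),
\end{align*}
where $\mathbf{S} = \mathbf{M}^{1/2}\mathbf{H}\mathbf{M}^{1/2}$ is evaluated at $\btheta_k + \alpha\bdelta_k$. This step only needs $\mathbf{M}^{1/2}$ to be symmetric, which holds since $\mathbf{M}\succ 0$. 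The hypothesis $\lambda_{\max}(\mathbf{S}) \le \Lambda_k$ on the segment then bounds the quadratic form by $\eta^2 \Lambda_k\, \bg_k^\top \mathbf{M}\bg_k$. The lower bound $\lambda_{\min}\ge 0$ is not needed for this upper bound; it only guarantees that the remainder is nonnegative, so the bound is not vacuous. Integrating against $(1-\alpha)$ over $[0,1]$ contributes the factor $1/2$, so
\begin{align*}
L(\btheta_{k+1}) - L(\btheta_k) \le -\eta\, \bg_k^\top\mathbf{M}\bg_k + \tfrac{1}{2}\eta^2\Lambda_k\, \bg_k^\top\mathbf{M}\bg_k,
\end{align*}
which is the claimed inequality.

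For the boxed remark, divide by $\eta$. The guaranteed decrease per unit LR is $(1-\tfrac12\eta\Lambda_k)\,\bg_k^\top\mathbf{M}\bg_k$. As $\eta\uparrow 2/\Lambda_k$, the prefactor tends to $0$, while $\bg_k^\top\mathbf{M}\bg_k$ is a fixed finite number that does not depend on $\eta$. We should phrase this carefully as the \emph{guaranteed} bound vanishing, not the actual decrease.

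The main subtlety is that $\Lambda_k$ is a bound over a segment whose endpoint $\btheta_{k+1}$ itself depends on $\eta$. The limit $\eta\uparrow 2/\Lambda_k$ should therefore be read with $\Lambda_k$ held fixed, for instance as a uniform bound valid on a neighborhood that contains all the relevant segments. The expansion also requires $L$ to be twice continuously differentiable along the segment. Beyond these two points, I expect the argument to be routine.
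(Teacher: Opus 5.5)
Your proposal is correct and matches the paper's proof: the paper first proves an exact one-step identity via Taylor's theorem with integral remainder, writing the quadratic term as $\eta^2$ times a quadratic form in $\mathbf{S}$, then bounds it by $\Lambda_k$ and uses $\int_0^1(1-t)\,dt=\tfrac12$. Your two side remarks go beyond what the paper states and are both right: the hypothesis $\lambda_{\min}\ge 0$ is not needed for the upper bound, and $\Lambda_k$ must be held fixed even though the segment depends on $\eta$.
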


\cref{thm:loss change} states that when $\eta$ is close to $2/\Lambda_{k}$, each update yields only a marginal decrease in loss.
Together with \cref{thm:stability condition}, it is clear that training near the stability boundary naturally leads to characteristic loss spikes and plateaus.

Importantly, the stability boundary is determined by the sharpness of the loss landscape.
To further address \textbf{\textcolor{red!70!black}{Q1-2}}, we analyze how sharpness evolves during the early phase.

\textbf{The early dynamics: from sharp to flat landscapes.}
We study how the local landscape geometry, particularly the sharpness, evolves for training runs in \cref{fig:instability and loss plateau}.
Specifically, we track the evolution of the top eigenvalues of the Hessian\footnote{Following \citet{cohen2021gradient}, we use the \texttt{Lanczos} algorithm to calculate top eigenvalues of Hessian.} $\mathbf{H}(\btheta_t)$.
For some early checkpoints $\btheta_t$, we also visualize the one-dimensional loss landscape along a random direction by plotting the function $ \mathcal{L}(\alpha) := L(\btheta_t + \alpha \bdelta)$ with $\bdelta \sim \mathcal{N}(0, \mathbf{I})$. 
\citet{NEURIPS2018_a41b3bb3} showed that these random-direction visualizations reliably capture intrinsic properties of the loss landscape properties, such as sharpness.
To ensure fair comparison across iterations, we fix the same random vector $\bdelta$ for all $\btheta_t$.

In \cref{fig:early sharpness}~(top), the largest eigenvalues of the Hessian $\mathbf{H}(\btheta_t)$ start at high values\footnote{In fact, at initialization, sharpness is extremely low but rises sharply after the first update. The sharpness curves reported in \cref{fig:early sharpness} therefore start from the first iteration.} and then decrease sharply. 
In \cref{fig:early sharpness}~(bottom), the loss landscape along a random direction progressively widens as training proceeds, confirming that the model shifts from sharp to flat regions.

\textbf{A Tuning Recipe: Larger Peak LR, Longer Warmup.}
We have seen that training stability depends on sharpness: when the landscape is shar, only a sufficiently small LR can keep training stable; 
and pre-training initially traverses from sharp landscapes to flatter ones.
Now let us return to \textbf{{\color{red!70!black} Q1}} and \textbf{{\color{red!70!black} Q2}}:
\begin{itemize}[leftmargin=3.5em,topsep=1pt,itemsep=1.5pt,partopsep=1.5pt, parsep=1.5pt]
    \item[\textbf{{\color{red!70!black} A1.}}] \emph{If the warmup phase is shortened, the LR rises too quickly while the model is still in sharp regions, leading to loss spikes and plateaus.}
    \item[\textbf{{\color{red!70!black} A2.}}] \emph{As training progresses, the landscape becomes flatter and the same LR no longer threatens stability, which explains why instability is confined to the very beginning.}
\end{itemize}

Therefore, in practice, we need a sufficiently long warmup phase to keep the LR small until sharpness has decayed, thereby preventing loss spikes and plateaus.
This further suggests a practical tuning recipe: \emph{the larger the peak LR, the longer the warmup should be}, ensuring iterates safely transition into flatter landscapes.

To validate this, we train models with varied peak LRs $\eta$ and warmup lengths $T_w$ (in iterations).
In \cref{fig:tuning recipe}, within a LR range of $2^{-8}$ to $2^{-11}$, larger peak LRs require \emph{proportionally} longer warmup to achieve the optimal validation loss $L(\btheta_{\text{best}})$.
However, this proportionality does not hold universally. 
When $\eta = 2^{-7}$, the optimal warmup length remains $2^{10}$ iterations, the same as for $\eta = 2^{-8}$.
Thus, the proportional relationship applies only within a certain range.

\section{Phase II: Local Landscape Governed by Noise Scale}\label{sec:late}

\begin{wrapfigure}{R}{0.45\textwidth}
    \vspace{-10pt}
    \centering
    \includegraphics[width=0.43\textwidth]{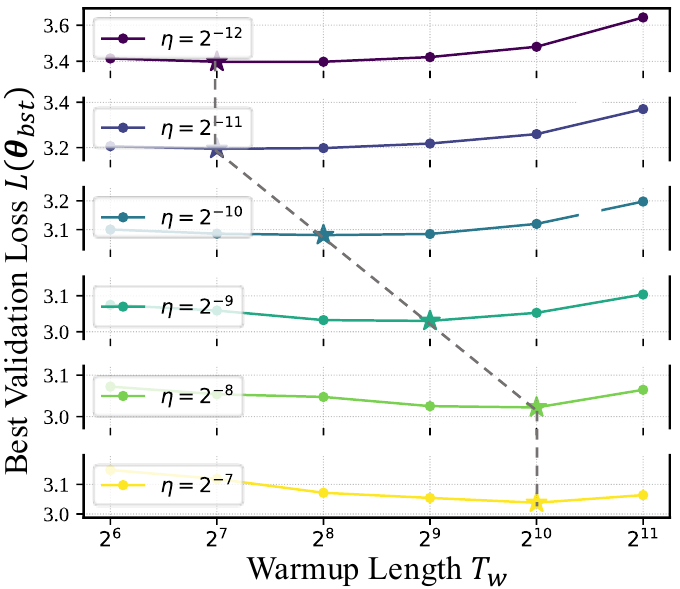}
    \caption{
    \textbf{Larger peak LR, longer warmup.}
    We plot the best validation loss $L(\btheta_{\text{bst}})$ vs. warmup $T_w$ for different $\eta$.
    Optimal $T_w$ is highlighted with a star.
    }
    \label{fig:tuning recipe}
\end{wrapfigure}

In this section, we turn to the local landscape geometry in late phase.
We observe that BS plays a central role: training with a large BS tends to find a \emph{deeper} basin of the landscape, whereas a small BS favors a \emph{wider} basin. 
Theoretically, we prove that this trade-off between \emph{widen} or \emph{deepen} is governed by the \emph{noise scale}.
Building on this, we propose a BS scheduler for the data-limited regime: \emph{use small BS early and ramp the BS late}, which consumes fewer tokens to achieve the same loss.

\textbf{The effect of BS: local landscapes late in training.}
We conduct experiments to systematically investigate the role of BS in shaping the local loss landscape  during the late phase of training.
Specifically, we train models with different BSs for $T=20{,}480$ iterations.
\cref{fig:bs effect}~(top left) shows the validation loss curves for each run.
Larger BS consistently leads to lower terminal loss and faster convergence in term of iterations\footnote{In terms of processed tokens, small BS training converges faster.}.
We then visualize the loss landscape around the final iterate $\btheta_T$.
In \cref{fig:bs effect}~(top right), it is clear that small BS produces flatter basins, whereas large BS yields deeper ones.
Furthermore, \cref{fig:bs effect}~(bottom) compares the landscape evolution of runs with $B = 0.49$M and $B = 7.8$M, indicating that in the late phase, larger BS tends to deepen the basin, while smaller BS shifts toward wider basins.

Despite these results, two key questions remain: 
\begin{itemize}[leftmargin=3.5em,topsep=1pt,itemsep=1.5pt,partopsep=1.5pt, parsep=1.5pt]
    \item[\textbf{{\color{red!70!black} Q3.}}]  \emph{Why is there a trade-off between widening and deepening the basin?}
    \item[\textbf{{\color{red!70!black} Q4.}}] \emph{Which factor underlying the hyperparameter BS governs this trade-off?}
\end{itemize} 

\begin{figure}[tb!]
    \centering
    \includegraphics[width=0.92\textwidth]{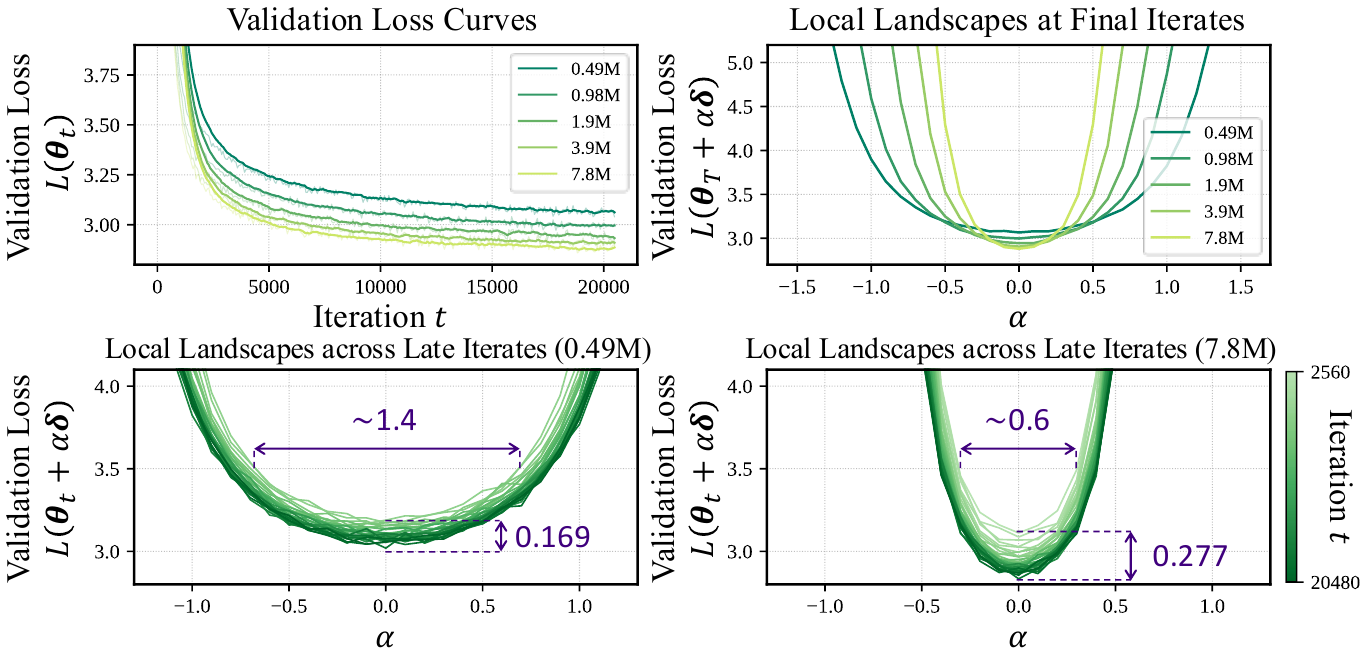}
    \caption{
    \textbf{Large BS deepens the basin, small BS widens the basin.}
    We train a series of \texttt{LLaMA-2} models (170M) for $T=20{,}480$ iterations, using BSs $B \in \{0.49\text{M}, 0.98\text{M}, 1.9\text{M}, 3.9\text{M}, 7.8\text{M}\}$.
    \textbf{(Top left).} Validation loss curves for different BSs: $L(\btheta_t)$ vs. training iteration $t$.
    \textbf{(Top right).} One-dimensional loss landscapes at the final iterates $\btheta_T$ along a random perturbation direction: perturbed loss $L(\btheta_T + \alpha \bdelta)$ vs. perturbation coefficient $\alpha$, visualized across different BSs.
    \textbf{(Bottom).} One-dimensional loss landscape: the perturbed loss $L(\btheta_t + \alpha \bdelta)$ vs. perturbation coefficient $\alpha$, shown across late training iterations $t$ for $B=0.49$M and $B=7.8$M.
    }
    \label{fig:bs effect}
    \vspace{-10pt}
\end{figure}

To delve into \textbf{{\color{red!70!black} Q3-4}}, we revisit the stochastic differential equation (SDE) in \citet{jastrzkebski2017three}.

\textbf{Widen or deepen: noise scale governs basin selection.}
Following \citet{jastrzkebski2017three}, we take the continuous-time limit of \cref{eq:noisy dynamics}.
Suppose that the noise covariance satisfies\footnote{The assumption of noise covariance is justified in \cref{sec:stochastic-analysis}.}
$\frac{\eta}{B} \mathbf{M} \mathbf{\Sigma}(\btheta^\star) \mathbf{M}^\top = 2\tau \mathbf{M} + \cO(\eta)$ for some temperature $\tau > 0$. 
As $\eta \to 0$, the scaled discrete process $\btheta_{\lfloor t/\eta \rfloor}$ converges weakly to the It\^o{} SDE:
\begin{align}\label{eq: SDE_main}
d\btheta_t &= -\mathbf{M} \nabla L(\btheta_t) dt + \sqrt{2\tau} \mathbf{M}^{1/2} dW_t
\end{align}
where $W_t$ is standard Brownian motion and the noise scale $\tau$ is proportional to $\eta / B$.

Building on \cref{eq: SDE_main} and the local quadratic model in \cref{eq:quadratic approximation}, we establish the trade-off between deepening and widening the loss basin.

\begin{theorem}[Depth-flatness trade-off]\label{thm:trade-off}
Let the empirical risk $L(\btheta)$ admit multiple local minima $\{\btheta_i^\star\}_{i=1}^m$ with Hessians $\mathbf{H}(\btheta_i^\star) \succ 0$.
Under the SDE in \cref{eq: SDE_main} with temperature $\tau$, the stationary probability that training resides in basin $i$ is given by:
\begin{align*}
P_\tau(\text{basin } i) &= \frac{\exp(-F_i(\tau)/\tau)}{\sum_j \exp(-F_j(\tau)/\tau)}, \quad 
F_i(\tau) := \colorbox{cyan!20}{$L(\btheta_i^\star)$} + \frac{\colorbox{green!8}{$\tau$}}{2} \colorbox{orange!20}{$\log\det \mathbf{H}(\btheta_i^\star)$}. 
\end{align*}
\end{theorem}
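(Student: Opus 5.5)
The plan is to identify the stationary law of the SDE in \cref{eq: SDE_main} as a Gibbs measure, and then evaluate the basin masses by a Laplace (saddle-point) approximation around each minimizer. Dividing by the total mass will give the stated softmax over free energies $F_i(\tau)$.

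\textbf{Step 1: stationary density.} The Fokker--Planck equation of \cref{eq: SDE_main} is
\begin{align*}
\partial_t p = \nabla\cdot\bigl(\mathbf{M}\nabla L\, p + \tau \mathbf{M}\nabla p\bigr) = \nabla\cdot\bigl(\mathbf{M}(p\nabla L + \tau\nabla p)\bigr).
\end{align*}
Because the drift $-\mathbf{M}\nabla L$ and the diffusion $\tau\mathbf{M}$ share the same matrix $\mathbf{M}$, the probability flux vanishes identically for $p_\infty(\btheta)=Z^{-1}\exp(-L(\btheta)/\tau)$. This is exactly where the temperature assumption $\frac{\eta}{B}\mathbf{M}\mathbf{\Sigma}\mathbf{M}^\top = 2\tau\mathbf{M}$ is used. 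The flux condition gives a detailed-balance (reversible) stationary solution.

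I will assume $\int e^{-L/\tau}<\infty$, for instance from growth of $L$ at infinity, so that $Z$ is finite. Under that assumption, together with ellipticity from $\mathbf{M}\succ0$, this Gibbs measure is the unique stationary distribution.

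\textbf{Step 2: basin masses.} Partition $\R^p$ into basins of attraction $\mathcal{B}_i$ of the minima $\btheta_i^\star$. Then $P_\tau(\text{basin } i)=Z^{-1}\int_{\mathcal{B}_i}e^{-L/\tau}\,d\btheta$.

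Inside $\mathcal{B}_i$, use the local quadratic model \cref{eq:quadratic approximation} with $\mathbf{H}_i=\mathbf{H}(\btheta_i^\star)\succ0$. Extending the Gaussian integral to all of $\R^p$ gives
\begin{align*}
\int_{\mathcal{B}_i}e^{-L/\tau}\,d\btheta \approx e^{-L(\btheta_i^\star)/\tau}(2\pi\tau)^{p/2}\det(\mathbf{H}_i)^{-1/2}.
\end{align*}

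\textbf{Step 3: rewrite as a free energy.} Write
\begin{align*}
\det(\mathbf{H}_i)^{-1/2}=\exp\bigl(-\tfrac{\tau}{2}\log\det\mathbf{H}_i\,/\tau\bigr).
\end{align*}
The factor $(2\pi\tau)^{p/2}$ is common to all basins, as is $Z$, so both cancel in the normalization. What remains is $\exp(-F_i(\tau)/\tau)/\sum_j\exp(-F_j(\tau)/\tau)$, which is the claimed formula.

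\textbf{Main obstacle.} The delicate step is justifying the Laplace approximation in Step 2. Two errors must be controlled: the contribution from regions of $\mathcal{B}_i$ away from $\btheta_i^\star$, and the cubic-and-higher Taylor terms. For small $\tau$, standard Laplace asymptotics give relative error $1+\cO(\tau)$, provided $L$ is $C^3$ and each minimum is strict and nondegenerate within its basin.

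The statement should therefore be read either as exact under the quadratic model taken within each basin (as the paper does), or as asymptotic as $\tau\to0$. I would state it in the exact-under-the-local-quadratic-model form, consistent with \cref{eq:quadratic approximation}, and remark that the $1+\cO(\tau)$ corrections do not alter the leading-order trade-off between $L(\btheta_i^\star)$ and $\frac{\tau}{2}\log\det\mathbf{H}(\btheta_i^\star)$.
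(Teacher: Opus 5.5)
Your proposal is correct and follows the paper's proof essentially step for step. Both arguments take the zero-flux solution of the Fokker--Planck equation to get the Gibbs density $\exp(-L/\tau)$, evaluate each basin integral as a Gaussian under the local quadratic model extended to all of $\R^p$, write $Z$ as the sum of the basin masses, and cancel the common $(2\pi\tau)^{p/2}$ factor. Your additional remarks on finiteness of $Z$, uniqueness of the stationary law, and the $1+\cO(\tau)$ Laplace error, which only shifts $F_i$ by $\cO(\tau^2)$, are sound refinements that the paper does not spell out.
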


\cref{thm:trade-off} states that the basin selection is controlled by the free energy function $F(\tau) = L(\btheta^\star) + \frac{\tau}{2} \log\det \mathbf{H}(\btheta^\star)$.
In early training, the loss term $L(\btheta^\star)$ dominates, so the model primarily seeks regions of lower loss.
In later training, $L(\btheta^\star)$ is comparable to the flatness penalty $\log\det \mathbf{H}(\btheta^\star)$, and basin selection becomes increasingly sensitive to the noise scale $\tau \propto \eta/B$.

\textbf{Efficient pre-training: a BS scheduler in data-limited regime.}
Turning back to \textbf{{\color{red!70!black} Q3}} and \textbf{{\color{red!70!black} Q4}}, the trade-off arises because basin selection balances loss minimization against curvature regularization (\textbf{{\color{red!70!black} A3}}), with the governing factor being the noise scale $\tau$ (\textbf{{\color{red!70!black} A4}}).
Since the primary objective of pre-training is to minimize the training loss\footnote{Note that our analysis focuses on how reduced noise (e.g., via larger BS) helps the optimizer move into deeper minima, conceptually different from the flat-minima perspective in fully interpolating regimes.}, this balance naturally favors largest BS available (small $\tau$).
In practice, however, data availability is limited, and excessively large BS substantially increase data consumption\footnote{For example, in \cref{fig:bs effect}~(top right), when $B = 7.8$M, the run consumes approximately $160$B tokens.}.
Thus, scheduling BS in pre-training is crucial, particularly in the data-limited regime.

\textbullet~\textbf{BS scheduler: design principle I.}
Inspired by our theory, loss reduction dominates early in training, during which large BS yields limited benefit. 
This suggests the first design principle.

\begin{snugshade}
\begin{center}
\vspace*{.1em}
{\bf Design principle I.} {\em Start the training process with a small BS before increasing it later.}
\end{center}
\vspace*{-.45em}
\end{snugshade}
\vspace{-.7em}

Related ideas were noted by \citet{li2025minimax,merrill2025critical}, often referred to as \emph{BS warmup}. 
However, a key difference in our design lies in when the BS should be increased.
Surprisingly, we find that ramping BS \emph{later in training} yields consistently \emph{greater} performance.

\begin{figure}[tb!]
    \centering
    \includegraphics[width=1\textwidth]{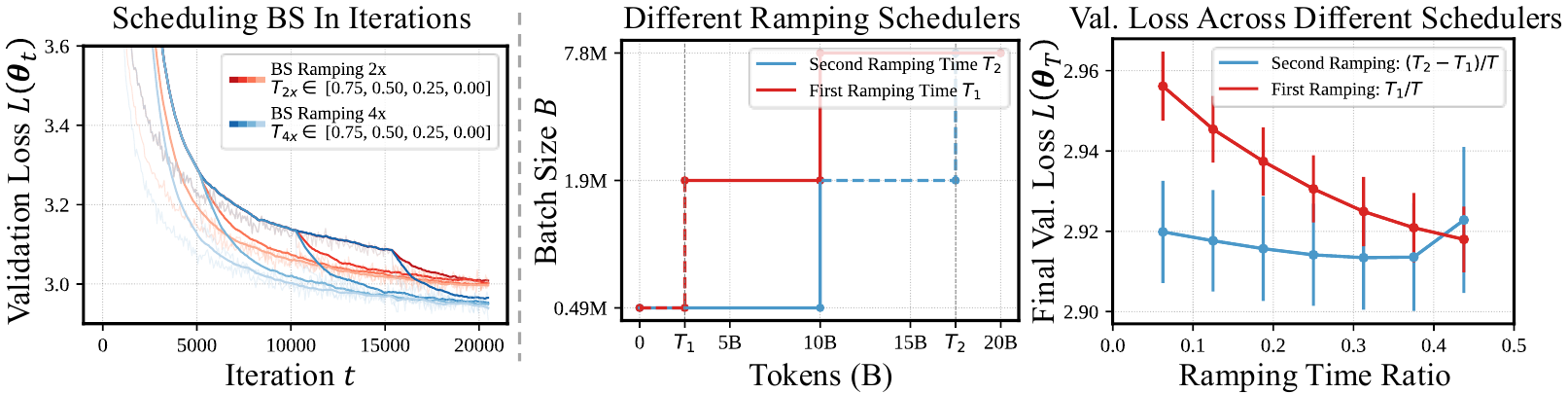}
    \caption{
    \textbf{(Left) Collapse of loss curves under different BS schedules.}
    Validation loss curves for training with different BS scheduling.
    In all runs, BS starts at $0.49$M.
    For blue curves, BS is ramped up to $4\times$ its initial value; for red curves, BS is ramped to $2\times$.
    The ramping times, $T_{2\times}$ or $T_{4\times}$, are varied across different positions.
    \textbf{(Middle, Right) Ramping BS is more efficient late in training.} 
    We evaluate a two-stage BS-ramping schedule with ramp times $T_1$ and $T_2$.
    For the red curves, we fix $T_2=10$B and vary $T_1$; for the blue curves, we fix $T_1=10$B and vary $T_2$.
    \textbf{(Middle).} Illustration of BS schedulers.
    \textbf{(Right).} Final validation loss vs. the relative ramping time, i.e., $(T_1)/T, (T_2 - T_1)/T\in [0,0.5]$, where $T$ denotes the total training tokens.
    }
    \label{fig:timing}
\end{figure}

\textbullet~\textbf{BS scheduler: design principle II.}
To study this, we train models with different BS schedulers while keeping total training iterations fixed.
In \cref{fig:timing}~(left), all runs begin with an initial BS of $0.49$M and ramp up to either $4\times$ or $2\times$ that value at different training iterations.
Remarkably, all loss curves eventually collapse onto the same trajectory, regardless of when the BS ramping occurs.
Note that when measured at the same training iteration, ramping the BS earlier results in higher data consumption.
This indicates that early BS ramping offers no efficiency advantage, achieving the same loss but consuming more data.

We next evaluate BS schedules under a fixed token budget.
Specifically, we consider a two-stage BS-ramping scheduler characterized by ramp times $T_1$ and $T_2$.
To isolate the effect of each stage, we vary either $T_1$ or $T_2$ while keeping the other fixed.
See \cref{fig:timing}~(middle) for an illustration.
In \cref{fig:timing}~(right), a clear trend emerges: BS ramping is most effective when applied late in training (i.e., with $T_1$ and $T_2$ large), whereas ramping too early consistently harms final performance.

Together, since BS ramping ultimately leads all runs onto the same trajectory, delaying it allows maximal progress (lower loss) along that trajectory under a data-limited budget.
This behavior also aligns with our theory.
In the late phase of training, the flatness penalty becomes comparable to the loss term, and BS ramping sharply reduces the noise scale, driving rapid convergence toward deeper minima.
This consistency between theory and practice leads to our second principle.

\begin{snugshade}
\begin{center}
\vspace*{.1em}
{\bf Design principle II.} {\em Ramp the batch size late in training—when loss reduction becomes marginal.}
\end{center}
\vspace*{-.45em}
\end{snugshade}
\vspace{-.7em}

\begin{figure}[tb!]
    \centering
    \includegraphics[width=1\textwidth]{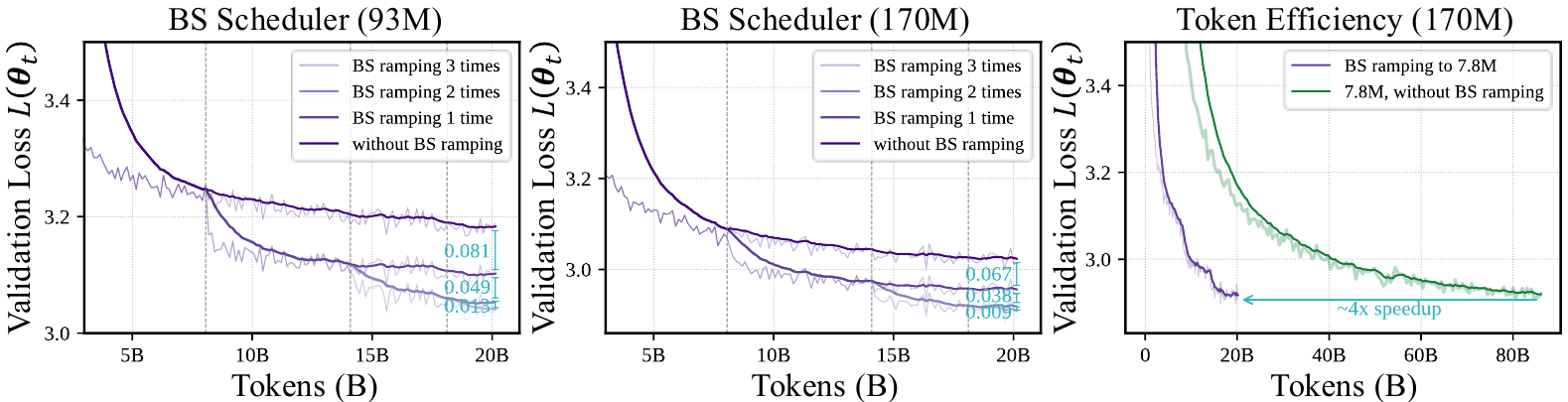}
    \caption{
    \textbf{BS scheduling improves data efficiency.}
    We train \texttt{LLaMA-2} models with $93$M and $170$M parameters, using a BS schedule that starts at $0.49$M and increases by $4\times$ at each ramp. 
    Models are trained with $1, 2$, or $3$ ramping steps, while models without ramping serve as the baseline. 
    Vertical gray dashed lines indicate ramping positions.
    \textbf{(Left, Middle).} The validation curves for each run.
    \textbf{(Right).} Comparison between training with BS ramping to $7.8$M and training with a fixed $7.8$M BS.
    }
    \label{fig:bs scheduler}
\end{figure}

To further validate our design principle, we train models using a BS schedule that starts at $0.49$M and ramps by $4\times$ whenever loss minimization slows. 
In \cref{fig:bs scheduler}~(left, middle), models with $1,2$ or $3$ BS ramping steps achieves significant lower validation loss. 
While additional ramping steps provide diminishing returns, each step still offers a measurable improvement.
Moreover, \cref{fig:bs scheduler}~(right) highlights the data-efficiency of the BS scheduling: ramping the BS up to $7.8$M achieves nearly the same final validation loss as training with a fixed $7.8$M BS, but requires only about $\frac{1}{4}$ of the tokens (i.e., a $\sim4\times$ speedup). 
These results confirm that our BS scheduling design preserves the benefits of large BS while substantially reducing data consumption.

\textbf{Comparison with} \citet{mccandlish2018empirical,merrill2025critical}.
\citet{mccandlish2018empirical} linked BS scaling to the gradient noise and introduced the notion of CBS.
\citet{merrill2025critical} explored the BS scheduling (BS warmup), doubling BS once the CBS exceeds the current BS.
We differs from CBS-related works: without estimating CBS on the fly, we propose to ramp the BS late in training.

\begin{figure}[tb!]
    \centering
    \includegraphics[width=0.88\textwidth]{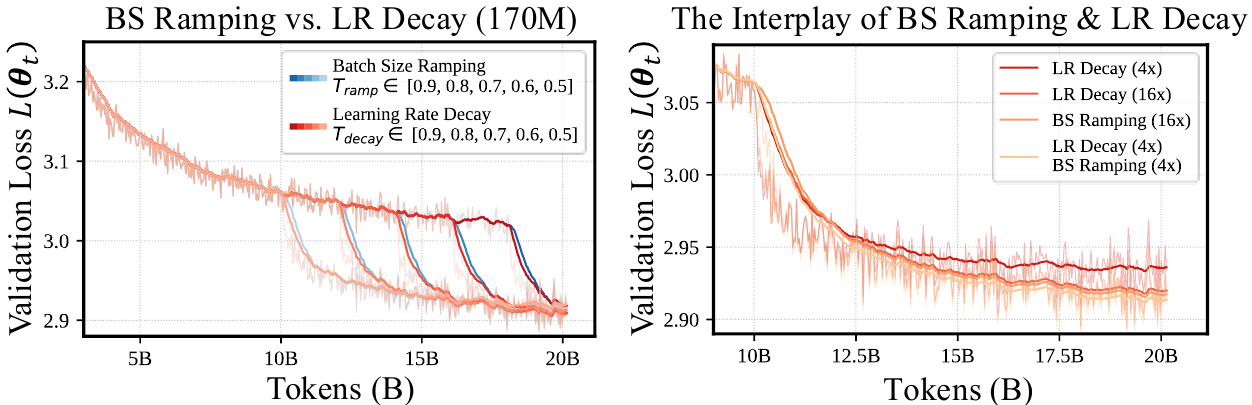}
    \caption{
    \textbf{(Left) BS ramping performs similarly to LR decay.}
    Validation loss curves for training with either BS ramping or LR decay.
    For BS ramping, BS increases to $16\times$ its initial value; for LR decay, the LR drops to $1/16$ of its initial value.
    Each method applies a single step at varying positions.
    \textbf{(Right) Interplay between BS ramping and LR decay.}
    We evaluate four different scheduling strategies.
    At the 10B tokens, (a) LR drops to $1/4$ of its initial value;
    (b) LR drops to $1/16$;
    (c) BS ramps to $16\times$.
    (d) LR drops to $1/4$ and BS ramps to $4\times$.
    In all runs (both left and right), BS starts at $0.49$M and LR begins at $2^{-10}$ (after linear warmup).
    }
    \label{fig:discussion}
\end{figure}

\section{More Discussions: LR Decay and BS Ramping}
We have excluded LR decay in our experiments to isolate the effect of BS ramping. 
Yet, recall that the noise scale $\tau$ is proportional to $\eta / B$. 
Our theory suggests that decaying the LR and ramping the BS both reduce the noise scale, and thus may have similar effects on basin selection. 

\textbf{Comparing BS ramping with LR decay.}
We first train models using either BS ramping or LR decay. 
Both methods apply a one-time step change: BS ramping multiplies the BS by $16$ at $T_{\text{ramp}}$, while LR decay divides the LR by $16$ at $T_{\text{decay}}$.
We align $T_{\text{ramp}}$ and $T_{\text{decay}}$ so that the changes occur at the same positions.
In \cref{fig:discussion}~(left), BS ramping and LR decay produce remarkably similar validation loss curves across all change positions, consistent with the idea that both reduce the noise scale in comparable ways.

\textbf{Interacting BS ramping with LR decay.}
Furthermore, we study the combined effect of using both BS ramping and LR decay.
Specifically, we decay the LR by $4\times$ and simultaneously ramp the BS by $4\times$ at 10B tokens.
We compare this hybrid schedule with three baselines: at the same point, we (a) drops the LR by $4\times$, (b) drops the LR by $16\times$ and (c) ramps the BS by $16\times$.
We denote the hybrid schedule by (d).
In \cref{fig:discussion}~(right), three of the schedules (b c d) produce nearly identical loss curves.
Crucially, these three configurations yield the same noise scale $\eta / B$.
In contrast, schedule (a) results in a noticeably different trajectory.

In summary, our results confirm our theoretical prediction: training dynamics in the late phase are governed by the noise scale $\eta / B$.
LR decay reduce the noise scale in the same manner as BS ramping, and any scheduler that preserves $\eta/B$ will exhibit nearly identical behavior.

\section{Conclusion and Limitations}

In conclusion, we present a study of how local landscape geometry evolves during language model pre-training. 
Our analysis reveals two phases: an early sharp-to-flat transition and a late noise-governed regime.
Phase I explains the necessity of LR warmup, suggesting that larger peak LRs require proportionally longer warmup lengths. 
Phase II motivates a BS scheduling that starts with small BS and increases the BS late in training.
\textbf{Limitations.}
The current theory primarily relies on strong assumptions, such as infinite-small LR in SDE.
A natural future direction is to generalize the theory to more realistic settings.
Additionally, the current theory cannot fully explain the collapse of loss curves under different BS schedules.
Understanding the learning dynamics under different BS schedules remains an open question.

\bibliography{iclr2027_conference}
\bibliographystyle{iclr2027_conference}

\clearpage
\newpage

\appendix

\section{Terminologies}

\begin{table}[h]
    \centering
    \renewcommand{\arraystretch}{1.25}
    \begin{tabular}{p{0.18\textwidth} p{0.34\textwidth} p{0.38\textwidth}}
        \hline
        \textbf{Terminology} & \textbf{General Meaning} & \textbf{Usage in This Paper} \\
        \hline
        Sharpness
        &
        A measure of curvature in the loss landscape, often characterized via the Hessian. Different works may define it differently.
        &
        We define sharpness as the curvature along the sharpest direction of the loss landscape. Mathematically, it is presented as the largest eigenvalue of the Hessian $\lambda_{\max}(\mathbf{H}(\boldsymbol{\theta}_t))$ or of the preconditioned curvature matrix $\lambda_{\max}(\mathbf{S}(\boldsymbol{\theta}_t))$.
        \\
        \hline
        Flat/sharp minimum
        &
        A minimum is a point where the gradient vanishes $\nabla L(\boldsymbol{\theta}) = 0$ and the loss does not decrease in a small neighborhood. A sharp minimum has large curvature; a flat minimum has small curvature.
        &
        We use these terms sparingly and follow the standard definitions from the sharpness/flat-minima literature.
        \\
        \hline
        Wide/deep basin
        &
        A loss basin is a region of the landscape surrounding a minimum. A wide basin rises loss slowly in most directions, whereas a deep basin has a significantly lower minimum value compared to its surroundings.
        &
        We use these terms to establish the depth–flatness trade-off: large noise scales tend to find wide basins, while small noise scales tend to find deeper regions with lower loss.
        \\
        \hline
    \end{tabular}
    \caption{Terminology and usage in this paper.}
    \label{tab:term}
\end{table}

\clearpage
\newpage

\section{Missing Proof}
\label{suppl:proof}

\subsection{Phase I: Lyapunov Stability Analysis}
\label{sec:stability-analysis}

\begin{lemma}[Stability Condition for Preconditioned GD]
\label{thm:main}

Define the preconditioned curvature matrix $\mathbf{S}(\btheta_k) := \mathbf{M}^{1/2} \mathbf{H}(\btheta_k) \mathbf{M}^{1/2}$, and let $\{\lambda\}_{i=1}^p$ be the eigenvalues of $\mathbf{S}(\btheta_k)$. 
The linear system in \cref{eq:linear system} is asymptotically stable (i.e., $\lim_{k\to\infty} \boldsymbol{e}_k = \boldsymbol{0}$) if $\eta$ satisfies 
\begin{equation}
0 < \eta < \frac{2}{\lambda_{\textup{max}}(\mathbf{S}(\btheta_k)}, \forall k \geq 0.
\label{eq:eta_condition}
\end{equation}
\end{lemma}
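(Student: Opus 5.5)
Throughout, write $\mathbf{H}_k:=\mathbf{H}(\btheta_k)$, $\mathbf{S}_k:=\mathbf{S}(\btheta_k)$, and $\mathbf{A}_k:=\mathbf{I}-\eta\mathbf{M}\mathbf{H}_k$.

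The plan is to symmetrize the linear system in \cref{eq:linear system} with the change of variables $\bff_k := \mathbf{M}^{-1/2}\be_k$. This is legitimate because $\mathbf{M}\succ 0$, so $\mathbf{M}^{\pm 1/2}$ exist and are symmetric positive definite. Multiplying $\be_{k+1}=\mathbf{A}_k\be_k$ on the left by $\mathbf{M}^{-1/2}$ and inserting $\mathbf{M}^{1/2}\mathbf{M}^{-1/2}$ gives
$\bff_{k+1} = (\mathbf{I}-\eta\mathbf{M}^{1/2}\mathbf{H}_k\mathbf{M}^{1/2})\bff_k = (\mathbf{I}-\eta\mathbf{S}_k)\bff_k$.
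The matrix $\mathbf{S}_k$ is symmetric and congruent to $\mathbf{H}_k$, and it is similar to $\mathbf{M}\mathbf{H}_k$. So the spectrum of the iteration matrix is real, namely $\{1-\eta\lambda_i(\mathbf{S}_k)\}$, and the operator norm of $\mathbf{I}-\eta\mathbf{S}_k$ equals its spectral radius $\rho_k=\max_i|1-\eta\lambda_i(\mathbf{S}_k)|$.

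Next I would show $\rho_k<1$ under \cref{eq:eta_condition}, which requires assuming $\mathbf{S}_k\succ 0$, i.e.\ $\mathbf{H}_k\succ0$. The natural setting is the late-phase quadratic model in \cref{eq:quadratic approximation}, with $\mathbf{H}(\btheta^\star)\succ0$; I would state this hypothesis explicitly. If $0<\lambda_i\le\lambda_{\max}$ and $0<\eta<2/\lambda_{\max}$, then $-1<1-\eta\lambda_i<1$, so $\rho_k<1$.

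In the time-invariant case, where $\mathbf{H}_k\equiv\mathbf{H}(\btheta^\star)$ under the quadratic model, this finishes the proof. We get $\|\bff_k\|_2\le\rho^k\|\bff_0\|_2\to0$, and therefore $\|\be_k\|_2\le\|\mathbf{M}^{1/2}\|_2\rho^k\|\mathbf{M}^{-1/2}\|_2\|\be_0\|_2\to 0$.

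The main obstacle is the genuinely time-varying case. There a bound $\rho_k<1$ at each $k$ does not by itself give $\prod_k\rho_k\to0$, since $\rho_k$ could tend to $1$. I would handle this by requiring a uniform margin: assume there exist constants $0<\mu\le\Lambda$ such that $\mu\mathbf{I}\preceq\mathbf{S}_k\preceq\Lambda\mathbf{I}$ for all $k$, and $\eta<2/\Lambda$. Then $\rho_k\le\bar\rho:=\max(|1-\eta\mu|,|1-\eta\Lambda|)<1$ for every $k$.

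The symmetrization is what makes this step work. Because each $\mathbf{I}-\eta\mathbf{S}_k$ is symmetric, its spectral norm equals its spectral radius, so the per-step norm bounds compose directly: $\|\bff_k\|_2\le\bar\rho^{\,k}\|\bff_0\|_2$. Without symmetrization, products of non-normal matrices each having spectral radius below $1$ could still grow. Mapping back through $\mathbf{M}^{1/2}$ then gives $\be_k\to\mathbf{0}$ exponentially. I would also remark that if the strict inequality $\mathbf{S}_k\succ0$ is dropped, components in $\ker\mathbf{S}_k$ are merely preserved rather than contracted.
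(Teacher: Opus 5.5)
Your argument uses the same key step as the paper: conjugating by $\mathbf{M}^{1/2}$ so the iteration matrix becomes $\mathbf{I}-\eta\mathbf{S}_k$, with eigenvalues $1-\eta\lambda_j(\mathbf{S}_k)$. The paper stops there. It asserts that $|1-\eta\lambda_j(\mathbf{S}_k)|<1$ at each step gives $\be_k\to\mathbf{0}$, and that this condition is equivalent to $0<\eta<2/\lambda_{\max}(\mathbf{S}_k)$. The first claim is valid only for a time-invariant matrix (e.g.\ $\mathbf{H}_k\equiv\mathbf{H}(\btheta^\star)$), and the equivalence holds only when $\mathbf{S}_k\succ 0$.

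You add exactly what is missing. First, you state the positivity hypothesis explicitly. Second, because $\mathbf{M}$ is fixed, the single change of variables $\bff_k=\mathbf{M}^{-1/2}\be_k$ symmetrizes every step at once. So $\|\mathbf{M}^{-1/2}\cdot\|_2$ is a common Lyapunov norm in which per-step spectral radii are operator norms, and they therefore multiply.

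Your extra hypotheses are not a weakness, because the lemma as literally stated fails without them:
\begin{itemize}
\item If $\mathbf{S}_k$ has both a positive and a negative eigenvalue, the stated condition is satisfiable, yet the negative mode is multiplied by $1-\eta\lambda_j>1$ at every step.
\item In the scalar case $\mathbf{M}=1$, $\eta=1$, $\lambda_k=(k+2)^{-2}$, the condition $\eta<2/\lambda_k$ holds for every $k$, yet $e_k=e_0\,\tfrac{k+2}{2(k+1)}\to e_0/2$.
\end{itemize}

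The paper's version buys brevity and is adequate for the fixed-Hessian quadratic model; yours proves a true statement for the time-varying system. Two further remarks on your bound $\|\bff_k\|_2\le\prod_{j<k}\rho_j\,\|\bff_0\|_2$. It shows the uniform margin can be relaxed to $\sum_k(1-\rho_k)=\infty$. And the argument relies on the preconditioner being constant: if $\mathbf{M}$ varied with $k$, no single $\mathbf{M}^{-1/2}$ would symmetrize all steps.
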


\begin{proof}
Since $\be_{k+1} = (\mathbf{I} - \eta \mathbf{M} \mathbf{H}_k) \be_k$, the linear system is asymptotically stable if all eigenvalues of $\mathbf{I} - \eta \mathbf{M} \mathbf{H}_k$ have magnitude less than 1. Note that:
\begin{equation}
    \mathbf{I} - \eta \mathbf{M} \mathbf{H}_k = \mathbf{M}^{1/2} (\mathbf{I} - \eta \mathbf{S}_k) \mathbf{M}^{-1/2},
\end{equation}
so the eigenvalues are $1 - \eta \lambda_j(\mathbf{S}_k)$. The stability condition $|1 - \eta \lambda_j| < 1$ for all $j$ is equivalent to:
\begin{equation}
    0 < \eta < \frac{2}{\lambda_{\max}(\mathbf{S}_k)}.
\end{equation}
\end{proof}

\begin{lemma}[Exact one-step loss change]\label{lem:exact_loss_change}
Define:
\begin{align*}
\mathbf{S}(\btheta) &:= \mathbf{M}^{1/2} \mathbf{H}(\btheta) \mathbf{M}^{1/2}, \\
\bg_k &:= \mathbf{M}^{1/2} \nabla L(\btheta_k), \\
\bdelta_k &:= \btheta_{k+1} - \btheta_k = -\eta \mathbf{M} \nabla L(\btheta_k).
\end{align*}
Then the true loss change can be written exactly as
\begin{equation}
L(\btheta_{k+1}) - L(\btheta_k) = 
-\eta \|\nabla L(\btheta_k)\|_{\mathbf{M}}^2 
+ \eta^2 \int_0^1 (1-t) \, \bg_k^\top \mathbf{S}(\btheta_k + t\bdelta_k) \bg_k \, dt,
\end{equation}
where $\|\nabla L(\btheta_k)\|_{\mathbf{M}}^2 := { ( \nabla L(\btheta_k) ) }^\top \mathbf{M} \nabla L(\btheta_k)$.
\end{lemma}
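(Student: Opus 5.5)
We will prove the identity by applying Taylor's theorem with integral remainder to the scalar function $\phi(t) := L(\btheta_k + t\bdelta_k)$ on $[0,1]$. This assumes only that $L$ is twice continuously differentiable along the segment, which is implicit in the definition of $\mathbf{H}(\btheta)$. Since $\phi(0) = L(\btheta_k)$ and $\phi(1) = L(\btheta_{k+1})$, the second-order Taylor formula gives
\begin{equation*}
\phi(1) - \phi(0) = \phi'(0) + \int_0^1 (1-t)\,\phi''(t)\,dt.
\end{equation*}
We can verify this directly by integrating $\int_0^1 (1-t)\phi''(t)\,dt$ by parts: the boundary term is $-\phi'(0)$ and the remaining term is $\int_0^1 \phi'(t)\,dt$.

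Next we compute the two derivatives by the chain rule. They are $\phi'(t) = \nabla L(\btheta_k + t\bdelta_k)^\top \bdelta_k$ and $\phi''(t) = \bdelta_k^\top \mathbf{H}(\btheta_k + t\bdelta_k)\bdelta_k$. Substituting $\bdelta_k = -\eta \mathbf{M}\nabla L(\btheta_k)$ into the first-order term gives $\phi'(0) = -\eta\,\nabla L(\btheta_k)^\top \mathbf{M}\nabla L(\btheta_k) = -\eta\|\nabla L(\btheta_k)\|_{\mathbf{M}}^2$.

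For the second-order term we use the symmetric square root of $\mathbf{M}$. It exists because $\mathbf{M}\succ 0$, and we tacitly take $\mathbf{M}$ symmetric, as is standard for a positive-definite preconditioner. Writing $\mathbf{M} = \mathbf{M}^{1/2}\mathbf{M}^{1/2}$ gives $\bdelta_k = -\eta\,\mathbf{M}^{1/2}\bg_k$. Therefore
\begin{equation*}
\phi''(t) = \eta^2\, \bg_k^\top \mathbf{M}^{1/2}\mathbf{H}(\btheta_k + t\bdelta_k)\mathbf{M}^{1/2}\bg_k = \eta^2\, \bg_k^\top \mathbf{S}(\btheta_k + t\bdelta_k)\bg_k.
\end{equation*}
Plugging both terms into the Taylor formula yields the stated identity.

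There is no real obstacle here. The only points needing care are the regularity assumption (that $L$ is $C^2$ on the segment, so the integral-form remainder is valid) and the symmetry of $\mathbf{M}^{1/2}$, which is what lets $\mathbf{M}$ be split across both sides of $\mathbf{H}$. This exact identity then feeds the inequality of \cref{thm:loss change}. Under the hypothesis on the segment, $0 \preceq \mathbf{S} \preceq \Lambda_k\mathbf{I}$. Hence $\bg_k^\top\mathbf{S}\bg_k \le \Lambda_k\|\bg_k\|^2 = \Lambda_k\|\nabla L(\btheta_k)\|_{\mathbf{M}}^2$, and since $\int_0^1(1-t)\,dt = \tfrac12$, the remainder term is at most $\tfrac12\eta^2\Lambda_k\|\nabla L(\btheta_k)\|_{\mathbf{M}}^2$.
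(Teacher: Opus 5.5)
Your proposal is correct and follows essentially the same route as the paper: Taylor's theorem with integral remainder applied to $\phi(t)=L(\btheta_k+t\bdelta_k)$, then substituting $\bdelta_k=-\eta\mathbf{M}^{1/2}\bg_k$ to rewrite $\phi''(t)$ in terms of $\mathbf{S}$. Your explicit remarks on $C^2$ regularity along the segment and on the symmetry of $\mathbf{M}^{1/2}$ state assumptions that the paper uses without stating them.
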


\begin{proof}
Let $\bdelta_k := \btheta_{k+1} - \btheta_k = -\eta \mathbf{M} \nabla L(\btheta_k)$ and define the scalar function
\[
\phi(t) := L(\btheta_k + t\bdelta_k), \quad t \in [0,1].
\]
Then
\[
L(\btheta_{k+1}) - L(\btheta_k) = \phi(1) - \phi(0).
\]
Compute the derivatives:
\begin{align*}
\phi'(t) &= { ( \nabla L(\btheta_k + t\bdelta_k) ) }^\top \bdelta_k, \\
\phi''(t) &= \bdelta_k^\top \mathbf{H}(\btheta_k + t\bdelta_k) \bdelta_k.
\end{align*}
By Taylor's theorem with integral remainder:
\[
\phi(1) - \phi(0) = \phi'(0) + \int_0^1 (1-t) \phi''(t) \, dt.
\]
Now evaluate at $t=0$:
\[
\phi'(0) = { ( \nabla L(\btheta_k) ) }^\top \bdelta_k = -\eta { ( \nabla L(\btheta_k) ) }^\top \mathbf{M} \nabla L(\btheta_k) = -\eta \|\nabla L(\btheta_k)\|_{\mathbf{M}}^2.
\]
For the second derivative term:
\[
\phi''(t) = \bdelta_k^\top \mathbf{H}(\btheta_k + t\bdelta_k) \bdelta_k = \eta^2 \bg_k^\top \mathbf{S}(\btheta_k + t\bdelta_k) \bg_k,
\]
since $\bdelta_k = -\eta \mathbf{M} \nabla L(\btheta_k)$ and $\bg_k = \mathbf{M}^{1/2} \nabla L(\btheta_k)$, and thus
\[
\bdelta_k^\top \mathbf{H}(\cdot) \bdelta_k = \eta^2 \bg_k^\top \mathbf{S}(\cdot) \bg_k.
\]
Substituting both terms yields the result.
\end{proof}

\begin{lemma}[One-step Loss Change]\label{lem:loss_change}
Let $\bdelta_k := \btheta_{k+1} - \btheta_k$. Suppose that along the segment $\{\btheta_k + \alpha \bdelta_k: \alpha \in [0, 1]\}$, we have $0 \leq \lambda_{\min} (\mathbf{S}(\btheta_k + \alpha \bdelta_k)) \leq \lambda_{\max} (\mathbf{S}(\btheta_k + \alpha \bdelta_k)) \leq \Lambda_k$.
Then, \begin{align*}
    L(\btheta_{k+1}) - L(\btheta_k) \leq - \eta (1 - \frac{1}{2}\eta \Lambda_k) (\nabla L(\btheta_k))^{\top} \mathbf{M}\nabla L(\btheta_k).  
\end{align*}

In particular, if $\eta \leq 2/\Lambda_k$, each update is guaranteed to non-increasing in loss, i.e., $L(\btheta_{k+1}) \leq L(\btheta_k)$. 
Instead, if $\eta \uparrow 2/\Lambda_k$, the guaranteed decrease per step $\boxed{(L(\btheta_{k}) - L(\btheta_{k+1}))/ \eta \rightarrow 0}$.
\end{lemma}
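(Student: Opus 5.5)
The plan is to start from the exact identity in \cref{lem:exact_loss_change} and bound its quadratic remainder using the spectral hypothesis on $\mathbf{S}$ along the segment. Since $\bdelta_k = -\eta \mathbf{M}\nabla L(\btheta_k)$ (the noiseless update), every point $\btheta_k + t\bdelta_k$ with $t\in[0,1]$ lies on the segment in the hypothesis. There $\mathbf{S}(\btheta_k + t\bdelta_k)$ is symmetric, because $\mathbf{M}^{1/2}$ and $\mathbf{H}$ are symmetric. Its eigenvalues lie in $[0,\Lambda_k]$, so the Rayleigh quotient bound gives
\[
0 \le \bg_k^\top \mathbf{S}(\btheta_k + t\bdelta_k)\bg_k \le \Lambda_k \|\bg_k\|_2^2 = \Lambda_k (\nabla L(\btheta_k))^\top \mathbf{M}\nabla L(\btheta_k).
\]

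Next, I will integrate this bound against the nonnegative weight $(1-t)$ on $[0,1]$. Since $\int_0^1(1-t)\,dt = \tfrac12$, the remainder term is at most $\tfrac12\eta^2\Lambda_k\|\nabla L(\btheta_k)\|_{\mathbf{M}}^2$. Substituting this into the exact identity gives
\[
L(\btheta_{k+1}) - L(\btheta_k) \le -\eta\bigl(1-\tfrac12\eta\Lambda_k\bigr)\|\nabla L(\btheta_k)\|_{\mathbf{M}}^2,
\]
which is the claimed inequality.

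The consequences follow directly from this bound. If $\eta \le 2/\Lambda_k$, the factor $1-\tfrac12\eta\Lambda_k$ is nonnegative. Since $\mathbf{M}\succ0$, the $\mathbf{M}$-norm is nonnegative, so the right-hand side is $\le 0$ and the loss does not increase. For the limit statement, I read the guaranteed decrease per unit step as the lower bound on $(L(\btheta_k)-L(\btheta_{k+1}))/\eta$ supplied by the inequality, namely $(1-\tfrac12\eta\Lambda_k)\|\nabla L(\btheta_k)\|_{\mathbf{M}}^2$. This quantity tends to $0$ as $\eta\uparrow 2/\Lambda_k$, at least when $\Lambda_k$ and the gradient are held fixed.

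No step is a genuine obstacle. The only point needing care is interpretation. Both the segment and $\Lambda_k$ depend on $\eta$, so the limit should be read with $\Lambda_k$ fixed, or as a limit of the guaranteed bound rather than of the actual loss change. I would state this explicitly when writing the proof.
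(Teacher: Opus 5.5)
Your proposal is correct and follows the paper's proof essentially verbatim. You bound the integral remainder of the exact Taylor identity in \cref{lem:exact_loss_change} via the Rayleigh quotient $\bg_k^\top \mathbf{S}(\btheta_k + t\bdelta_k)\bg_k \le \Lambda_k\|\bg_k\|^2$, integrate the weight $(1-t)$ to obtain $\tfrac12$, and use $\|\bg_k\|^2 = \|\nabla L(\btheta_k)\|_{\mathbf{M}}^2$. Your two extra remarks, that $\bdelta_k$ is the noiseless update and that the limit $\eta \uparrow 2/\Lambda_k$ is a statement about the guaranteed bound with $\Lambda_k$ held fixed, are sensible clarifications that the paper leaves implicit.
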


\begin{proof}
From Lemma~\ref{lem:exact_loss_change}, we have
\[
\bg_k^\top \mathbf{S}(\btheta_k + t\bdelta_k) \bg_k \leq \Lambda_k \|\bg_k\|^2
\]
for all $t$, since $\mathbf{S}(\cdot)$ is symmetric. Therefore,
\begin{align*}
L(\btheta_{k+1}) - L(\btheta_k) 
&\leq -\eta \|\nabla L(\btheta_k)\|_{\mathbf{M}}^2 + \eta^2 \Lambda_k \|\bg_k\|^2 \int_0^1 (1-t) \, dt \\
&= -\eta \|\nabla L(\btheta_k)\|_{\mathbf{M}}^2 + \tfrac{1}{2} \eta^2 \Lambda_k \|\bg_k\|^2.
\end{align*}
Note that $\|\bg_k\|^2 = \|\nabla L(\btheta_k)\|_{\mathbf{M}}^2$, yielding the result.
\end{proof}

\subsection{Phase II: SDE Analysis}
\label{sec:stochastic-analysis}

\subsubsection{Discrete-Time Solution}

\begin{lemma}[Eigenbasis Decomposition]
\label{lem: Eigenbasis Decomposition}
Let $\mathbf{S} := \mathbf{M}^{1/2} \mathbf{H}(\btheta^\star) \mathbf{M}^{1/2}$ with eigendecomposition $\mathbf{S} = \mathbf{Q}\boldsymbol{\Lambda} \mathbf{Q}^\top$, $\boldsymbol{\Lambda} = \text{diag}(\lambda_1, \dots, \lambda_d)$. Define $\mathbf{G} := \mathbf{Q}^\top \mathbf{M}^{1/2} \boldsymbol{\Sigma}(\btheta^\star)  \mathbf{M}^{1/2} \mathbf{Q}/B$. In coordinates $\bw_k := \mathbf{Q}^\top \mathbf{M}^{-1/2} \be_k$, the recursion gives:
\begin{align*}
\bw_{k+1} &= (\mathbf{I} - \eta \boldsymbol{\Lambda}) \bw_k + \eta \boldsymbol{\zeta}_k, \quad \mathbb{E}[\boldsymbol{\zeta}_k\boldsymbol{\zeta}_k^\top] = \mathbf{G}
\end{align*}
The stationary covariance $\boldsymbol{\Sigma}_w$ has diagonal elements:
\begin{align}
{(\boldsymbol{\Sigma}_w)}_{jj} &= \frac{\eta^2 \mathbf{G}_{jj}}{1 - {(1 - \eta\lambda_j)}^2} = \frac{\eta \mathbf{G}_{jj}}{2\lambda_j - \eta\lambda_j^2}
\end{align}
\end{lemma}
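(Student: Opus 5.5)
We take $\be_k := \btheta_k - \btheta^\star$ and work under the quadratic model \cref{eq:quadratic approximation}, so $\nabla L(\btheta_k) = \mathbf{H}\be_k$ exactly, with $\mathbf{H} := \mathbf{H}(\btheta^\star)$. We also freeze the noise covariance at $\boldsymbol{\Sigma}(\btheta^\star)$, the standard local approximation near the minimum. The noisy update \cref{eq:noisy dynamics} then becomes the linear recursion
\[
\be_{k+1} = (\mathbf{I} - \eta\mathbf{M}\mathbf{H})\be_k - \eta\mathbf{M}\bxi_k .
\]

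\textbf{Change of coordinates.} First set $\mathbf{u}_k := \mathbf{M}^{-1/2}\be_k$. Multiplying the recursion on the left by $\mathbf{M}^{-1/2}$, and using $\mathbf{M}^{-1/2}(\mathbf{I}-\eta\mathbf{M}\mathbf{H})\mathbf{M}^{1/2} = \mathbf{I} - \eta\mathbf{S}$ (the similarity already used in the proof of \cref{thm:main}), gives
\[
\mathbf{u}_{k+1} = (\mathbf{I}-\eta\mathbf{S})\mathbf{u}_k - \eta\mathbf{M}^{1/2}\bxi_k .
\]
Next rotate by $\mathbf{Q}^\top$: with $\bw_k = \mathbf{Q}^\top\mathbf{u}_k$ we get $\mathbf{Q}^\top(\mathbf{I}-\eta\mathbf{S})\mathbf{Q} = \mathbf{I}-\eta\boldsymbol{\Lambda}$. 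Define $\boldsymbol{\zeta}_k := -\mathbf{Q}^\top\mathbf{M}^{1/2}\bxi_k$. It has mean zero, and
\[
\mathbb{E}[\boldsymbol{\zeta}_k\boldsymbol{\zeta}_k^\top] = \mathbf{Q}^\top\mathbf{M}^{1/2}\boldsymbol{\Sigma}(\btheta^\star)\mathbf{M}^{1/2}\mathbf{Q}/B = \mathbf{G}.
\]
This gives the stated recursion. The sign flip in $\boldsymbol{\zeta}_k$ does not affect its covariance.

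\textbf{Stationary covariance.} Let $\mathbf{C}_k := \mathbb{E}[\bw_k\bw_k^\top]$. Because $\boldsymbol{\zeta}_k$ is independent of $\bw_k$ (i.i.d. noise, and $\bw_k$ depends only on $\boldsymbol{\zeta}_0,\dots,\boldsymbol{\zeta}_{k-1}$), the cross terms vanish and
\[
\mathbf{C}_{k+1} = (\mathbf{I}-\eta\boldsymbol{\Lambda})\mathbf{C}_k(\mathbf{I}-\eta\boldsymbol{\Lambda}) + \eta^2\mathbf{G}.
\]
Since $\mathbf{I}-\eta\boldsymbol{\Lambda}$ is diagonal, the $(j,j)$ entry decouples into the scalar recursion
\[
c_{k+1} = (1-\eta\lambda_j)^2 c_k + \eta^2\mathbf{G}_{jj}.
\]
Under the stability condition $0<\eta<2/\lambda_{\max}(\mathbf{S})$ of \cref{thm:main}, and assuming $\lambda_j>0$ (true because $\mathbf{H}\succ0$ and $\mathbf{M}\succ0$), we have $(1-\eta\lambda_j)^2<1$. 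This is a contraction, so $c_k$ converges geometrically to its unique fixed point
\[
\eta^2\mathbf{G}_{jj}/\bigl(1-(1-\eta\lambda_j)^2\bigr).
\]
Expanding the denominator gives $2\eta\lambda_j - \eta^2\lambda_j^2$, and cancelling one factor of $\eta$ gives the second form. The off-diagonal entries follow the same way, $\eta^2\mathbf{G}_{ij}/(1-(1-\eta\lambda_i)(1-\eta\lambda_j))$, but only the diagonal is claimed.

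\textbf{Main obstacle.} The main subtlety is not computational. It is justifying the modelling step of freezing $\boldsymbol{\Sigma}(\btheta_k)$ at $\boldsymbol{\Sigma}(\btheta^\star)$ and treating the noise as independent of the current iterate. We will state this explicitly as part of the local approximation, consistent with how $\mathbf{G}$ is defined in the lemma. We also need to state the stability hypothesis on $\eta$ so that the stationary covariance exists and is positive.
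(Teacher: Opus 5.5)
Your proposal is correct and follows the paper's proof essentially step for step. It uses the same change of variables $\bw_k = \mathbf{Q}^\top\mathbf{M}^{-1/2}\be_k$ to diagonalize $\mathbf{I}-\eta\mathbf{M}\mathbf{H}(\btheta^\star)$, identifies the noise covariance with $\mathbf{G}$ in the same way, and solves the same per-coordinate fixed-point equation $(\boldsymbol{\Sigma}_w)_{jj} = (1-\eta\lambda_j)^2(\boldsymbol{\Sigma}_w)_{jj} + \eta^2\mathbf{G}_{jj}$. It is also slightly more careful than the paper on points the paper leaves implicit: you track the minus sign on the noise term (the paper writes $+\eta\mathbf{M}\bxi_k$), you justify the vanishing cross terms via independence, and you impose $0<\eta<2/\lambda_{\max}(\mathbf{S})$ so that the stationary covariance exists and is positive.
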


\begin{proof}
First, we verify that $\mathbf{S} = \mathbf{M}^{1/2} \mathbf{H}(\btheta^\star) \mathbf{M}^{1/2}$ can be eigendecomposed. Since both $\mathbf{M}$ and $\mathbf{H}(\btheta^\star)$ are positive definite matrices, $\mathbf{S}$ is also positive definite matrix. By the spectral theorem, $\mathbf{S}$ admits the eigendecomposition $\mathbf{S} = \mathbf{Q}\boldsymbol{\Lambda} \mathbf{Q}^\top$, where $\mathbf{Q}$ is orthogonal and $\boldsymbol{\Lambda} = \text{diag}(\lambda_1, \dots, \lambda_d)$ with $\lambda_i > 0$.

Starting from $\be_{k+1} = \mathbf{A}\be_k + \eta \mathbf{M}\bxi_k$ with $\mathbf{A} = \mathbf{I} - \eta \mathbf{M}\mathbf{H}(\btheta^\star)$, we change variables to $\bw_k = \mathbf{Q}^\top \mathbf{M}^{-1/2} \be_k$.
\begin{align*}
\bw_{k+1} &= \mathbf{Q}^\top \mathbf{M}^{-1/2} \be_{k+1} = \mathbf{Q}^\top \mathbf{M}^{-1/2} (\mathbf{A}\be_k + \eta \mathbf{M}\bxi_k) \\
&= \mathbf{Q}^\top \mathbf{M}^{-1/2} (\mathbf{I} - \eta \mathbf{M}\mathbf{H}(\btheta^\star)) \be_k + \eta \mathbf{Q}^\top \mathbf{M}^{1/2} \bxi_k \\
&= \mathbf{Q}^\top \mathbf{M}^{-1/2} \be_k - \eta \mathbf{Q}^\top \mathbf{M}^{1/2} \mathbf{H}(\btheta^\star) \be_k + \eta \mathbf{Q}^\top \mathbf{M}^{1/2} \bxi_k \\
&= \bw_k - \eta \mathbf{Q}^\top \mathbf{M}^{1/2} \mathbf{H}(\btheta^\star) \mathbf{M}^{1/2} \mathbf{Q} \bw_k + \eta \mathbf{Q}^\top \mathbf{M}^{1/2} \bxi_k \\
&= \bw_k - \eta \mathbf{Q}^\top \mathbf{S} \mathbf{Q} \bw_k + \eta \mathbf{Q}^\top \mathbf{M}^{1/2} \bxi_k \\
&= (\mathbf{I} - \eta \boldsymbol{\Lambda}) \bw_k + \eta \mathbf{Q}^\top \mathbf{M}^{1/2} \bxi_k
\end{align*}

Defining $\boldsymbol{\zeta}_k := \mathbf{Q}^\top \mathbf{M}^{1/2} \bxi_k$, we get:
\begin{align*}
\mathbb{E}[\boldsymbol{\zeta}_k\boldsymbol{\zeta}_k^\top] &= \mathbf{Q}^\top \mathbf{M}^{1/2} \mathbb{E}[\bxi_k\bxi_k^\top] \mathbf{M}^{1/2} \mathbf{Q} = \mathbf{Q}^\top \mathbf{M}^{1/2} \boldsymbol{\Sigma}(\btheta^\star)  \mathbf{M}^{1/2} \mathbf{Q}/B =: \mathbf{G}
\end{align*}

As matrix $(\mathbf{I} - \eta \boldsymbol{\Lambda})$ is diagonal, the recursion now decouples into independent scalar equations for each component $j$:
\begin{align*}
(\bw_{k+1})_j &= (1-\eta\lambda_j)(\bw_k)_j + \eta(\boldsymbol{\zeta}_k)_j.
\end{align*}

For each component $j$, the stationary variance satisfies:
\begin{align}
(\boldsymbol{\Sigma}_w)_{jj} &= (1-\eta\lambda_j)^2 (\boldsymbol{\Sigma}_w)_{jj} + \eta^2 \mathbf{G}_{jj} \label{eq:line4}
\end{align}

Solving for $(\boldsymbol{\Sigma}_w)_{jj}$:
\begin{align*}
(\boldsymbol{\Sigma}_w)_{jj} &= \frac{\eta^2 \mathbf{G}_{jj}}{1 - (1-\eta\lambda_j)^2} = \frac{\eta^2 \mathbf{G}_{jj}}{1 - (1-2\eta\lambda_j + \eta^2\lambda_j^2)} \\
&= \frac{\eta^2 \mathbf{G}_{jj}}{2\eta\lambda_j - \eta^2\lambda_j^2} = \frac{\eta \mathbf{G}_{jj}}{2\lambda_j - \eta\lambda_j^2}
\end{align*}
\end{proof}

\subsubsection{Continuous-Time Limit}
We now take the continuous-time limit ($\eta \to 0$) to derive a simpler universal theory.
The exact solution for the variance in the eigenbasis from \cref{lem: Eigenbasis Decomposition}, i.e., $(\boldsymbol{\Sigma}_w)_{jj} = \eta \mathbf{G}_{jj}/(2\lambda_j - \eta\lambda_j^2)$, guides the necessary scaling for the continuous-time limit.
Because $(\boldsymbol{\Sigma}_w)_{jj}$ converges to a finite non-zero value as $\eta \to 0$, the numerator $\eta \mathbf{G}_{jj}$ must remain finite. 
This suggests defining a quantity $\tau $ such that for each mode $j$:
\[
\eta \, \mathbf{G}_{jj} \to 2\tau \quad \text{as} \quad \eta \to 0.
\]
We strengthen this to :
\[
\eta \, \mathbf{G} \to 2\tau\mathbf{I} \quad \text{as} \quad \eta \to 0.
\]
Recalling that $\mathbf{G} = \mathbf{Q}^\top \mathbf{M}^{1/2} \boldsymbol{\Sigma}(\btheta^\star)  \mathbf{M}^{1/2} \mathbf{Q}/B$, this condition in the original coordinate system translates to the required scaling for the noise covariance:
\[
\frac{\eta}{B} \mathbf{M} \boldsymbol{\Sigma}(\btheta^\star) \mathbf{M}^{\top} \to 2\tau \mathbf{M}.
\]

\begin{proposition}[Convergence to SDE]\label{thm: converge to SDE}
Consider the scaled discrete process $\btheta_{\lfloor t/\eta \rfloor}$ as $\eta \to 0$.
Suppose the noise covariance satisfies
\begin{align}\label{eq:scaling}
\frac{\eta}{B} \mathbf{M} \mathbf{\Sigma}(\btheta^\star) \mathbf{M}^\top &= 2\tau \mathbf{M} + O(\eta),  
\end{align}
for some temperature $\tau > 0$. Then the process converges weakly to the It\^o{} SDE:
\begin{align}\label{eq:SDE_main}
d\btheta_t &= -\mathbf{M} \nabla L(\btheta_t) dt + \sqrt{2\tau} \mathbf{M}^{1/2} dW_t 
\end{align}
where $W_t$ is standard Brownian motion.
\end{proposition}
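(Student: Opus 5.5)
The plan is to treat \cref{eq:noisy dynamics} as a stochastic Euler--Maruyama-type scheme with step size $\eta$ and to invoke the standard weak-approximation framework for stochastic modified equations (Milstein's weak convergence theorem as used by Li, Tai and E). The idea is to match the first two conditional moments of one discrete step with those of one SDE step of length $\eta$, up to $o(\eta)$ errors. Then global weak error over the horizon $[0,T]$ is bounded by summing these local errors.

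\textbf{Step 1: increment moments.} Let $\Delta := \btheta_{k+1}-\btheta_k$, conditioned on $\btheta_k=\btheta$. From \cref{eq:noisy dynamics},
\begin{align*}
\mathbb{E}[\Delta] = -\eta\mathbf{M}\nabla L(\btheta), \qquad
\mathbb{E}[\Delta\Delta^\top] = \eta^2\mathbf{M}\nabla L\nabla L^\top\mathbf{M} + \frac{\eta^2}{B}\mathbf{M}\mathbf{\Sigma}(\btheta)\mathbf{M}^\top .
\end{align*}
By the scaling hypothesis \cref{eq:scaling}, the second term is $\eta\,(2\tau\mathbf{M}+\cO(\eta))$. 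Hence $\mathbb{E}[\Delta\Delta^\top]=2\tau\eta\mathbf{M}+\cO(\eta^2)$.

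For the SDE \cref{eq:SDE_main} started at $\btheta$ and run for time $\eta$, Itô--Taylor expansion gives drift $-\eta\mathbf{M}\nabla L(\btheta)+\cO(\eta^2)$. It also gives second moment $\eta\cdot(\sqrt{2\tau}\mathbf{M}^{1/2})(\sqrt{2\tau}\mathbf{M}^{1/2})^\top+\cO(\eta^2)=2\tau\eta\mathbf{M}+\cO(\eta^2)$. So the first two moments agree to order $\eta^2$.

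To close the local estimate I also need the third absolute moment $\mathbb{E}|\Delta|^3=\cO(\eta^{3/2})$, and ideally $\cO(\eta^2)$. This follows if $\bxi_k$ has bounded higher moments of order $\sqrt{B/\eta}$, which is exactly the regime the scaling imposes. I will state it as a regularity assumption: $\nabla L$ is Lipschitz and $C^3$ with polynomial growth, and $\mathbf{\Sigma}$ is evaluated near $\btheta^\star$, frozen as in the quadratic model \cref{eq:quadratic approximation}.

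\textbf{Step 2: local-to-global.} Next I take a test function $\varphi\in C^4$ with polynomial growth and form $u(s,\btheta)=\mathbb{E}[\varphi(\btheta_T)\mid\btheta_s=\btheta]$ for the SDE. Backward Kolmogorov regularity of $u$ holds because the SDE has constant diffusion and Lipschitz drift. I then telescope
\[
\mathbb{E}\varphi(\btheta_{\lfloor T/\eta\rfloor})-\mathbb{E}\varphi(\btheta_T)=\sum_k \mathbb{E}\big[u((k+1)\eta,\btheta_{k+1})-u(k\eta,\btheta_k)\big].
\]
Each summand is $\cO(\eta^{3/2})$ or better by Step 1 together with uniform moment bounds on the discrete iterates. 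Summing $T/\eta$ such terms gives a vanishing error, which yields weak convergence.

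\textbf{Main obstacle.} The delicate point is the noise scaling. The covariance of $\eta\mathbf{M}\bxi_k$ is of order $\eta\tau$ rather than $\eta^2$, so $\bxi_k$ itself diverges like $\eta^{-1/2}$. Controlling the third-moment remainder and the uniform-in-$k$ moment bounds of $\btheta_k$ therefore requires an explicit assumption on the higher moments of the rescaled noise $\sqrt{\eta/B}$-normalized $\bxi_k$. A second subtlety is that \cref{eq:scaling} is imposed only at $\btheta^\star$, while the dynamics move through a neighborhood. I would handle this by working in the late-phase quadratic regime and treating $\mathbf{\Sigma}(\btheta)\approx\mathbf{\Sigma}(\btheta^\star)$ as constant, consistent with \cref{lem: Eigenbasis Decomposition}. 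Alternatively, one can assume $\mathbf{\Sigma}$ is Lipschitz, so the discrepancy contributes only a lower-order drift-of-diffusion error.
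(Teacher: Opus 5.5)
Your proposal is correct, but it takes a different route from the paper. The paper computes only the one-step generator $\mathcal{L}^{(\eta)}f(\btheta)=\eta^{-1}\mathbb{E}[f(\btheta_{k+1})-f(\btheta_k)\mid\btheta_k=\btheta]$. It shows, by a second-order Taylor expansion and the scaling hypothesis, that this tends to $-\nabla f^\top\mathbf{M}\nabla L+\tau\Tr(\mathbf{M}\nabla^2 f)$. It then appeals to generator convergence and the martingale problem without checking tightness, well-posedness, or moment bounds. Like your first option, it also silently freezes the increment covariance at $\mathbf{\Sigma}(\btheta^\star)$.

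You instead match the first two increment moments with the It\^o--Taylor expansion of the SDE. You then pass from local to global error by telescoping along the backward Kolmogorov solution $u$, in the Milstein / Li--Tai--E style. This costs you hypotheses the paper leaves implicit:
\begin{itemize}
\item smoothness of $u$;
\item uniform-in-$k$ moment bounds on the iterates;
\item higher-moment bounds on the rescaled noise. The natural rescaling is $\sqrt{\eta}\,\bxi_k$, whose covariance $\eta\mathbf{\Sigma}/B$ is $\cO(1)$ by the scaling hypothesis; your $\sqrt{\eta/B}$ normalization is slightly off.
\end{itemize}
In return you get a quantitative rate. The $\cO(\eta^{3/2})$ local error, summed over $T/\eta$ steps, gives a global weak error $\cO(\eta^{1/2})$. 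This improves to $\cO(\eta)$ when the rescaled noise has vanishing third moment. Your bookkeeping is sharper than the paper's: its $\cO(\eta^{3/2})$ Taylor remainder, divided by $\eta$, makes the generator error $\cO(\eta^{1/2})$, not the stated $\cO(\eta)$.

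The price is scope. Your argument gives convergence of $\mathbb{E}\varphi(\btheta_{\lfloor T/\eta\rfloor})$ at fixed times, which is weak approximation in the Li--Tai--E sense and extends to finite-dimensional marginals by the Markov property. The martingale-problem route, once completed, would give weak convergence of the whole process on path space.
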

\begin{proof}
Consider the discrete preconditioned SGD update:
\[
\btheta_{k+1} = \btheta_k - \eta \mathbf{M} (\nabla L(\btheta_k) + \bxi_k),
\]
Define the scaled process \(\btheta^{(\eta)}(t) = \btheta_{\lfloor t/\eta \rfloor}\). The increment \(\Delta \btheta_k = \btheta_{k+1} - \btheta_k\) satisfies:
\[
\mathbb{E}[\Delta \btheta_k \mid \btheta_k = \btheta] = -\eta \mathbf{M} \nabla L(\btheta),
\]
\[
\text{Cov}(\Delta \btheta_k \mid \btheta_k = \btheta) = \frac{\eta^2}{B} \mathbf{M} \boldsymbol{\Sigma}(\btheta^\star)\mathbf{M}^\top .
\]
Given the scaling condition \cref{eq:scaling}, the covariance is \(O(\eta)\).

The generator \(\mathcal{L}^{(\eta)}\) of the discrete process for a smooth function $\bff$ is:
\[
\mathcal{L}^{(\eta)} \bff(\btheta) = \frac{1}{\eta} \mathbb{E}[\bff(\btheta_{k+1}) - \bff(\btheta_k) \mid \btheta_k = \btheta].
\]
Using a Taylor expansion and taking conditional expectation:
\[
\mathbb{E}[\bff(\btheta_{k+1}) - \bff(\btheta_k) \mid \btheta] = -\eta \nabla \bff(\btheta)^\top \mathbf{M} \nabla L(\btheta) + \frac{1}{2} \mathbb{E}[(\Delta \btheta)^\top \nabla^2 \bff(\btheta) \Delta \btheta] + O(\eta^{3/2}).
\]
For the second term, with $\Delta \btheta = -\eta M(\nabla L(\btheta) + \bxi_k)$:
\begin{align*}
\mathbb{E}[(\Delta \btheta)^\top \nabla^2 \bff(\btheta) \Delta \btheta] &= \eta^2 \mathbb{E}[(\nabla L(\btheta) + \bxi_k)^\top \mathbf{M}^\top \nabla^2 \bff(\btheta) \mathbf{M} (\nabla L(\btheta) + \bxi_k)] \\
&= \eta^2 \mathbb{E}[\bxi_k^\top \mathbf{M}^\top \nabla^2 \bff(\btheta) \mathbf{M} \bxi_k] + O(\eta^2) \\
&= \eta^2 \text{Tr}(\mathbf{M}^\top \nabla^2 \bff(\btheta) \mathbf{M} \mathbb{E}[\bxi_k \bxi_k^\top]) + O(\eta^2) \\
&= \frac{\eta^2}{B} \text{Tr}(\mathbf{M}^\top \nabla^2 \bff(\btheta) \mathbf{M} \mathbf{\Sigma}(\btheta^\star)) + O(\eta^2) \\
&= \frac{\eta^2}{B} \text{Tr}(\mathbf{M} \mathbf{\Sigma}(\btheta^\star) \mathbf{M}^\top \nabla^2 \bff(\btheta)) + O(\eta^2)
\end{align*}
where we used $\mathbb{E}[\bxi^\top \mathbf{A} \bxi] = \text{Tr}(A \mathbb{E}[\bxi \bxi^\top])$ and trace cyclicity $\text{Tr}(\mathbf{ABC}) = \text{Tr}(\mathbf{CAB})$.

Therefore:
\[
\frac{1}{2} \mathbb{E}[(\Delta \btheta)^\top \nabla^2 \bff(\btheta) \Delta \btheta] = \frac{\eta^2}{2B} \text{Tr}(\mathbf{M} \mathbf{\Sigma}(\btheta^\star) \mathbf{M}^\top \nabla^2 \bff(\btheta)) + O(\eta^2).
\]

Using the scaling condition \cref{eq:scaling}, we have :
\[
\frac{\eta^2}{2B} \text{Tr}(\mathbf{M} \boldsymbol{\Sigma}(\btheta^\star) \mathbf{M}^\top \nabla^2 \bff(\btheta)) = \frac{\eta}{2} \text{Tr}\left(2\tau \mathbf{M} \nabla^2 \bff(\btheta)\right) + O(\eta^2) = \eta \tau \text{Tr}(\mathbf{M} \nabla^2 \bff(\btheta)) + O(\eta^2).
\]
Thus,
\[
\mathcal{L}^{(\eta)} \bff(\btheta) = -\nabla \bff(\btheta)^\top \mathbf{M} \nabla L(\btheta) + \tau \text{Tr}(\mathbf{M} \nabla^2 \bff(\btheta)) + O(\eta).
\]
As \(\eta \to 0\), \(\mathcal{L}^{(\eta)} \bff(\btheta)\) converges to:
\[
\mathcal{L} \bff(\btheta) = -\nabla \bff(\btheta)^\top \mathbf{M} \nabla L(\btheta) + \tau \text{Tr}(\mathbf{M} \nabla^2 \bff(\btheta)),
\]
which is the generator of the Itô SDE:
\[
d\btheta_t = -\mathbf{M} \nabla L(\btheta_t) dt + \sqrt{2\tau} \mathbf{M}^{1/2} dW_t.
\]
By the weak convergence theory (e.g., via the martingale problem or generator convergence), the process \(\btheta^{(\eta)}(t)\) converges weakly to the solution of this SDE.
\end{proof}

\begin{proposition}[Gibbs Stationary Distribution]
The SDE in \cref{eq:SDE_main} has stationary distribution:
\begin{align}
p_\infty(\btheta) &\propto \exp(-L(\btheta)/\tau) \label{eq:line5}
\end{align}
\end{proposition}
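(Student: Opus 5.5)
The plan is to verify directly that $p_\infty(\btheta)\propto \exp(-L(\btheta)/\tau)$ solves the stationary Fokker--Planck equation associated with the SDE in \cref{eq:SDE_main}. The generator was already identified in the proof of \cref{thm: converge to SDE}:
\[
\mathcal{L}\bff = -\nabla\bff^\top \mathbf{M}\nabla L + \tau\,\Tr(\mathbf{M}\nabla^2\bff).
\]
Its formal adjoint gives the Fokker--Planck operator
\[
\mathcal{L}^* p = \nabla\cdot\bigl(\mathbf{M}\nabla L\, p\bigr) + \tau\,\nabla\cdot\bigl(\mathbf{M}\nabla p\bigr).
\]
This expression uses that $\mathbf{M}$ is constant and symmetric, so $\sum_{ij}\partial_i\partial_j(M_{ij}p)=\nabla\cdot(\mathbf{M}\nabla p)$. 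I will derive $\mathcal{L}^*$ by integrating $\int p\,\mathcal{L}\bff$ by parts against compactly supported test functions $\bff$.

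The key step is to rewrite the operator in divergence form with a probability flux:
\[
\mathcal{L}^* p = \nabla\cdot \mathbf{J}, \qquad \mathbf{J} := \mathbf{M}\bigl(p\nabla L + \tau\nabla p\bigr).
\]
For $p = Z^{-1}e^{-L/\tau}$ we have $\nabla p = -\tau^{-1}p\nabla L$. Hence $p\nabla L + \tau\nabla p = 0$, so $\mathbf{J}\equiv 0$, and therefore $\mathcal{L}^*p=0$. This is the detailed-balance (zero-flux) form of stationarity. It is exactly where the scaling assumption \cref{eq:scaling} enters: the diffusion matrix $2\tau\mathbf{M}$ is proportional to the preconditioner appearing in the drift. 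Without that proportionality the Gibbs density would not in general be stationary. The preconditioner affects only the speed of relaxation, not the invariant law.

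Finally, I must check that $p_\infty$ is a genuine probability density, i.e.\ that $Z=\int e^{-L(\btheta)/\tau}\,d\btheta<\infty$. Under the local quadratic model \cref{eq:quadratic approximation} with $\mathbf{H}(\btheta^\star)\succ0$, the integral is Gaussian and equals $e^{-L(\btheta^\star)/\tau}(2\pi\tau)^{p/2}\det\mathbf{H}^{-1/2}$. In the multi-basin setting of \cref{thm:trade-off}, I would assume enough growth of $L$ at infinity (e.g.\ $L(\btheta)\ge c\|\btheta\|^2 - C$) so that $Z<\infty$. This Gaussian evaluation is also what later produces the $\frac{\tau}{2}\log\det\mathbf{H}$ term in $F_i(\tau)$.

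The main obstacle is rigor rather than computation. Showing $\mathcal{L}^*p=0$ only shows that $p_\infty$ is \emph{an} invariant density. Uniqueness, and convergence of the SDE law to it, requires nondegenerate noise, which holds since $\mathbf{M}^{1/2}$ is invertible, together with a confining condition on $L$. I would handle this by citing standard results for reversible diffusions with uniformly elliptic constant diffusion and a Lyapunov function $V=L$: hypoellipticity plus irreducibility gives uniqueness, and no further derivation is needed at the level of detail of this paper.
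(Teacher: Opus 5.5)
Your proposal is correct and follows essentially the same route as the paper: both write the Fokker--Planck equation $\partial_t p = \nabla\cdot(\mathbf{M}\nabla L\,p) + \tau\,\nabla\cdot(\mathbf{M}\nabla p)$ in divergence form and use that the flux $\mathbf{M}(p\nabla L + \tau\nabla p)$ vanishes for $p \propto e^{-L/\tau}$. The paper runs this as a derivation (imposing zero flux and integrating $\nabla \log p_\infty = -\nabla L/\tau$), whereas you verify the candidate directly, which avoids the paper's unjustified ``we require $J=0$'' step; your remarks on normalizability of $e^{-L/\tau}$ and uniqueness of the invariant law are additions the paper leaves implicit.
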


\begin{proof}
The generator of the SDE (5) is $\mathcal{L}f = -\mathbf{M}\nabla L \cdot \nabla f + \tau \text{tr}(\mathbf{M} \nabla^2 f)$. The Fokker-Planck equation for the probability density $p(t,\btheta)$ is:
\begin{align*}
\partial_t p &= \mathcal{L}^*p = \nabla \cdot (\mathbf{M} \nabla L \, p) + \tau \nabla \cdot (\mathbf{M} \nabla p)
\end{align*}
where $\mathcal{L}^*$ is the adjoint operator. Setting $\partial_t p = 0$ for stationarity:
\begin{align*}
0 &= \nabla \cdot (\mathbf{M} \nabla L \, p_\infty) + \tau \nabla \cdot (\mathbf{M} \nabla p_\infty) \\
&= \nabla \cdot (\mathbf{M} \nabla L \, p_\infty + \tau \mathbf{M} \nabla p_\infty)
\end{align*}
This implies the current $J = \mathbf{M} \nabla L \, p_\infty + \tau \mathbf{M} \nabla p_\infty$ has zero divergence. For a potential-driven system, we require $J = \mathbf{0}$:
\begin{align*}
\mathbf{M} \nabla L \, p_\infty + \tau \mathbf{M} \nabla p_\infty &= \mathbf{0} \\
\nabla L \, p_\infty + \tau \nabla p_\infty &= \mathbf{0} \quad \text{(since $\mathbf{M} \succ 0$)} \\
\frac{\nabla p_\infty}{p_\infty} &= -\frac{\nabla L}{\tau}
\end{align*}
Integrating: $\log p_\infty = -L/\tau + \text{const}$, which gives \cref{eq:line5}.
\end{proof}


\begin{theorem}[Free Energy Minimization]
Let the empirical risk $L(\btheta)$ admit multiple local minima $\{\btheta_i^\star\}_{i=1}^m$ with Hessians $\mathbf{H}(\btheta_i^\star) \succ 0$.
Under the SDE in \cref{eq:SDE_main} with temperature $\tau$, the stationary probability that training resides in basin $i$ is given by:
\begin{align}\label{free energy formula}
P_\tau(\text{basin } i) &= \frac{\exp(-F_i(\tau)/\tau)}{\sum_j \exp(-F_j(\tau)/\tau)}, \quad 
F_i(\tau) := L(\btheta_i^\star) + \frac{\tau}{2} \log\det \mathbf{H}(\btheta_i^\star). 
\end{align}
\end{theorem}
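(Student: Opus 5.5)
The plan is to start from the Gibbs stationary distribution established in the preceding proposition, $p_\infty(\btheta) \propto \exp(-L(\btheta)/\tau)$, and compute the probability mass of each basin by a Laplace (saddle-point) approximation around its minimizer. Concretely, partition parameter space into basins of attraction $\mathcal{B}_i$ of the gradient flow $\dot\btheta = -\mathbf{M}\nabla L(\btheta)$, one for each $\btheta_i^\star$. Then
\begin{align*}
P_\tau(\text{basin } i) = \frac{Z_i}{\sum_j Z_j}, \qquad Z_i := \int_{\mathcal{B}_i} \exp(-L(\btheta)/\tau)\, d\btheta .
\end{align*}
This reduces everything to evaluating the partition functions $Z_i$.

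Within $\mathcal{B}_i$ I will use the local quadratic model in \cref{eq:quadratic approximation} centered at $\btheta_i^\star$:
\begin{align*}
L(\btheta) \approx L(\btheta_i^\star) + \tfrac12 (\btheta-\btheta_i^\star)^\top \mathbf{H}(\btheta_i^\star)(\btheta-\btheta_i^\star).
\end{align*}
Substituting this gives a Gaussian integral. Since $\mathbf{H}(\btheta_i^\star)\succ 0$, extending the domain from $\mathcal{B}_i$ to $\R^p$ costs only an exponentially small error as $\tau\to 0$. This yields
\begin{align*}
Z_i \approx e^{-L(\btheta_i^\star)/\tau}\,(2\pi\tau)^{p/2}\det\mathbf{H}(\btheta_i^\star)^{-1/2}.
\end{align*}
The factor $(2\pi\tau)^{p/2}$ is common to all basins and cancels in the ratio. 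I then rewrite $\det \mathbf{H}_i^{-1/2} = \exp\bigl(-\tfrac12\log\det\mathbf{H}_i\bigr) = \exp\bigl(-\tfrac{1}{\tau}\cdot\tfrac{\tau}{2}\log\det \mathbf{H}_i\bigr)$. Combining exponents gives $Z_i \propto \exp(-F_i(\tau)/\tau)$ with $F_i(\tau) = L(\btheta_i^\star) + \tfrac{\tau}{2}\log\det\mathbf{H}(\btheta_i^\star)$, and normalizing yields \eqref{free energy formula}.

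The preconditioner $\mathbf{M}$ does not appear in the answer because the stationary density from the previous proposition is already independent of $\mathbf{M}$. I will remark on this so that one does not expect $\det \mathbf{S}$ in place of $\det\mathbf{H}$.

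The main obstacle is making the Laplace step rigorous. The formula is exact only to leading order in $\tau$, with relative corrections $1+\cO(\tau)$ from cubic and quartic terms. It also needs $Z_i$ finite, i.e.\ $\exp(-L/\tau)$ integrable, for example via growth of $L$ at infinity, and well-separated minima, so that each basin contains a neighborhood where the quadratic model holds. I would state these as standing assumptions. I would then bound the contribution of $\mathcal{B}_i$ outside a ball of radius $r$ around $\btheta_i^\star$ by $e^{-(L(\btheta_i^\star)+c r^2)/\tau}$ times the volume, and control the inside via Taylor's theorem. I would present the result as the leading-order asymptotic, consistent with the paper's use of the quadratic approximation as the model in each basin.
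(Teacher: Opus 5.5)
Your proposal is correct and takes the same route as the paper's proof. Both start from the Gibbs measure $\exp(-L/\tau)$, write each basin's mass as a ratio of partition functions, evaluate it as a Gaussian integral via the quadratic model extended to all of $\R^p$, and cancel the common $(2\pi\tau)^{p/2}$ factor. Your caveats (the result holds to leading order in $\tau$, integrability of $e^{-L/\tau}$, well-separated basins) are points the paper glosses over by writing these steps as exact equalities.
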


\begin{proof}
From the Gibbs distribution \cref{eq:line5}, the probability mass in basin $i$ is:
\begin{align*}
P_\tau(\text{basin } i) &= \frac{\int_{B_i} e^{-L(\btheta)/\tau} d\btheta}{\int_{\mathbb{R}^d} e^{-L(\btheta)/\tau} d\btheta}
\end{align*}

where $B_i$ is the basin of attraction around minimum $\btheta_i^\star$.

For the numerator, using the quadratic approximation $L(\btheta) = L(\btheta_i^\star) + \frac{1}{2}(\btheta - \btheta_i^\star)^\top \mathbf{H}(\btheta_i^\star) (\btheta - \btheta_i^\star)$ in basin $i$ we get:
\begin{align*}
\int_{B_i} e^{-L(\btheta)/\tau} d\btheta &= \int_{\mathbb{R}^d} \exp\left(-\frac{L(\btheta_i^\star)}{\tau} - \frac{1}{2\tau}(\btheta - \btheta_i^\star)^\top \mathbf{H}(\btheta_i^\star) (\btheta - \btheta_i^\star)\right) d\btheta \\
&= e^{-L(\btheta_i^\star)/\tau} \int_{\mathbb{R}^d} \exp\left(-\frac{1}{2\tau}(\btheta - \btheta_i^\star)^\top \mathbf{H}(\btheta_i^\star) (\btheta - \btheta_i^\star)\right) d\btheta
\end{align*}

The integral is a multivariate Gaussian with covariance $\tau\mathbf{H}(\btheta_i^\star)^{-1}$. Using the standard formula for Gaussian integrals:
\begin{align*}
\int_{\mathbb{R}^d} \exp\left(-\frac{1}{2}y^\top \mathbf{\Sigma}^{-1} y\right) dy &= (2\pi)^{d/2} (\det \mathbf{\Sigma})^{1/2}
\end{align*}
With $\mathbf{\Sigma} = \tau\mathbf{H}(\btheta_i^\star)^{-1}$, we have $\det \mathbf{\Sigma} = \tau^d (\det \mathbf{H}(\btheta_i^\star))^{-1}$ and $\mathbf{\Sigma}^{-1} = \tau^{-1}\mathbf{H}(\btheta_i^\star)$:
\begin{align*}
\int_{\mathbb{R}^d} \exp\left(-\frac{1}{2\tau}(\btheta - \btheta_i^\star)^\top \mathbf{H}(\btheta_i^\star) (\btheta - \btheta_i^\star)\right) d\btheta &= (2\pi)^{d/2} (\tau^d (\det \mathbf{H}(\btheta_i^\star))^{-1})^{1/2} \\
&= (2\pi \tau)^{d/2} (\det \mathbf{H}(\btheta_i^\star))^{-1/2}
\end{align*}

Therefore:
\begin{align*}
\int_{B_i} e^{-L(\btheta)/\tau} d\btheta &= e^{-L(\btheta_i^\star)/\tau} (2\pi \tau)^{d/2} (\det \mathbf{H}(\btheta_i^\star))^{-1/2} \\
&= (2\pi \tau)^{d/2} \exp\left(-L(\btheta_i^\star)/\tau - \frac{1}{2}\log\det \mathbf{H}(\btheta_i^\star)\right) \\
&= (2\pi \tau)^{d/2} \exp\left(-\frac{1}{\tau}\left(L(\btheta_i^\star) + \frac{\tau}{2}\log\det \mathbf{H}(\btheta_i^\star)\right)\right) \\
&= (2\pi \tau)^{d/2} \exp(-F_i(\tau)/\tau)
\end{align*}

Similarly, the total partition function is:
\begin{align*}
Z(\tau) &= \int_{\mathbb{R}^d} e^{-L(\btheta)/\tau} d\btheta =d \sum_{j=1}^m \int_{B_j} e^{-L(\btheta)/\tau} d\btheta \\
&= (2\pi \tau)^{d/2} \sum_{j=1}^m \exp(-F_j(\tau)/\tau)
\end{align*}

Therefore:
\begin{align*}
P_\tau(\text{basin } i) &= \frac{(2\pi \tau)^{d/2} \exp(-F_i(\tau)/\tau)}{(2\pi \tau)^{d/2} \sum_j \exp(-F_j(\tau)/\tau)} = \frac{\exp(-F_i(\tau)/\tau)}{\sum_j \exp(-F_j(\tau)/\tau)}
\end{align*}

This completes the proof of the free energy formula \cref{free energy formula}.
\end{proof}

\clearpage
\newpage

\section{Experimental Setups and More Results}
\label{suppl:exp}

\subsection{Experimental Setups}

\textbf{Models.}
We utilize two popular classes of LLM models for our pre-training experiments:
\begin{itemize}
[leftmargin=2.5em,topsep=1pt,itemsep=1.5pt,partopsep=1.5pt, parsep=1.5pt]
    \item {\bf GPT-2.} 
    We use GPT-2 (small)  model~\citep{radford2019language}, implemented via the nanoGPT code base ~\citep{Karpathy2022}. 
    Following nanoGPT, the model employs Gaussian Error Linear Unit (GELU) activations and standard Layer Normalization (LayerNorm). 
    Detailed model configurations are provided in \cref{tab:model config}.
    \item {\bf LLaMA.} 
    LLaMA~\citep{touvron2023llama} is another popular decoder-only Transformer architecture, incorporating Rotary Positional Encoding (RoPE)~\citep{su2024roformer}, Swish-Gated Linear Unit (SwiGLU), and Root mean square layer normalization (RMSNorm). 
    For implementation, we utilize the LLaMA code from HuggingFace Transformers Library~\citep{wolf-etal-2020-transformers}. 
    Additional model configurations are detailed in \cref{tab:model config}.
\end{itemize}

\textbf{Datasets.}
Training is performed on the \texttt{FineWeb-Edu} dataset~\citep{penedo2024the}. 
We adopt the a subset randomly sampled from the whole dataset of around 100B \texttt{GPT-2} tokens. 
The same dataset has been widely used in literature on LLM pre-training.

\textbf{Optimizers.}
To generalize our findings across different optimizers, we choose:
\begin{itemize}
[leftmargin=2.5em,topsep=1pt,itemsep=1.5pt,partopsep=1.5pt, parsep=1.5pt]
    \item \textbf{AdamW.} AdamW~\citep{kingma2014adam} is adopted with hyperparameters $\beta_1 = 0.95$, $\beta_2 = 0.95$, and weight decay $0.1$.
    \item \textbf{Muon.} Muon~\citep{jordan2024muon} is used with momentum of $0.95$ and weight decay $0.1$.
    \item \textbf{Adam-mini.} The hyperparameter of Adam-mini~\citep{zhang2024adam} is the same as AdamW.
    \item \textbf{Lion.} Lion~\citep{chen2024symbolic} is used with hyperparameters $\beta_1 = 0.95$, $\beta_2 = 0.98$. The LR of Lion $\eta$ is divided by $10\times$ compared with the LR of AdamW in the same experiments, and the weight decay $\lambda$ is ramped up to $10\times$ to keep the effective LR $\lambda \eta = 0.1$.
\end{itemize}
All these optimizers are used with gradient clipping at $1.0$ for stability.

\begin{table}[!ht]
    \centering
    \renewcommand{\arraystretch}{1.25}
    \caption{Model configurations.}
    \label{tab:model config}
    \begin{tabular}{l|c|c|c|c|c}
    \hline 
    Acronym & Size & $d_{\mathrm{model}}$ & $d_{\mathrm{FF}}$ & n$\_$head & depth \\
    \hline\hline 
    GPT-2 (small) & 124M & 768 & 3072 & 12 & 12 \\
    LLaMA (93M) & 93M & 512 & 2048 & 16 & 8\\
    LLaMA (170M) & 170M & 768 & 3072 & 12 & 8\\
    LLaMA (270M) & 270M & 1024 & 4096 & 16 & 8\\
    LLaMA (530M) & 530M & 1536 & 6144 & 24 & 8\\
    \hline 
    \end{tabular}
\end{table}

\subsection{More Results Under Various Setups.}

In this section, we extend our findings to other architectures, optimization algorithms, and larger training scales.
Due to computational constraints, we primarily focus on validating the sharp-to-flat early dynamics and the proposed BS scheduling principle across these settings.
We also explore the warmup–tuning recipe on additional architectures.

\begin{figure}[tb!]
    \centering
    \includegraphics[width=0.81\textwidth]{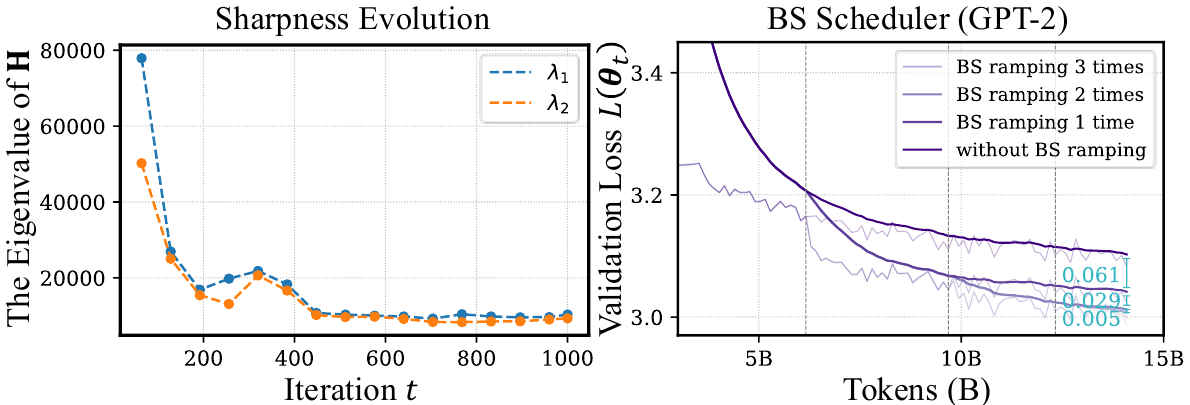}
    \caption{
    \textbf{Extensive to \texttt{GPT-2} architectures.}
    \textbf{(Left). Early sharp-to-flat dynamics.}
    Evolution of the top eigenvalues of the Hessian across iterations: $\lambda_i(\mathbf{H}(\btheta_t))$ vs. iteration $t$.
    \textbf{(Right). BS scheduling improves data efficiency.}
    \textbf{BS scheduling improves data efficiency.}
    We use a BS schedule that starts at $0.49$M and increases by $4\times$ at each ramp. 
    Models are trained with $1, 2$, or $3$ ramping steps, while models without ramping serve as the baseline. 
    Vertical gray dashed lines indicate ramping positions.
    }
    \label{fig:gpt_parta}
    \vspace{-5pt}
\end{figure}

\begin{figure}[tb!]
    \centering
    \includegraphics[width=0.47\textwidth]{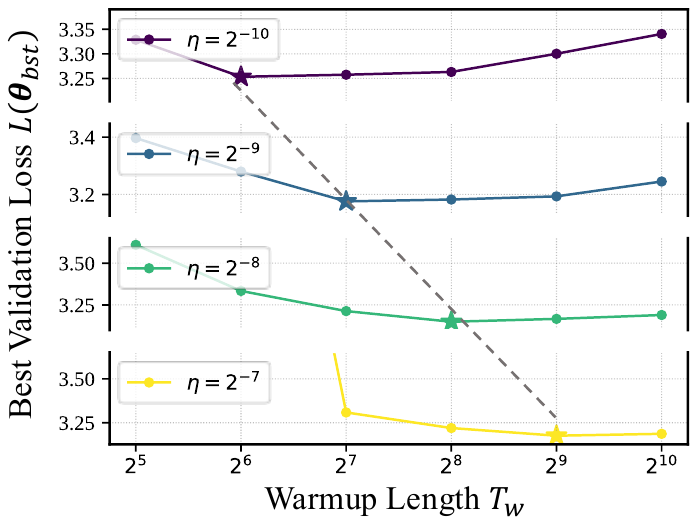}
    \caption{
    \textbf{Extensive to \texttt{GPT-2} architectures.}
    \textbf{Larger Peak LR, Longer Warmup.}
    We train a series of \texttt{GPT-2} models with 100 TPP.
    We vary the peak LRs $\eta$ and warmup lengths $T_w$. 
    We plot the best validation loss $L(\btheta_{\text{bst}})$ vs. $T_w$ for different $\eta$.
    The optimal $T_w$ is highlighted with a star.
    }
    \label{fig:gpt_partb}
    \vspace{-5pt}
\end{figure}

\textbf{Extension to \texttt{GPT-2} Architectures.}
See \cref{fig:gpt_parta,fig:gpt_partb} for details.

\begin{figure}[tb!]
    \centering
    \includegraphics[width=0.81\textwidth]{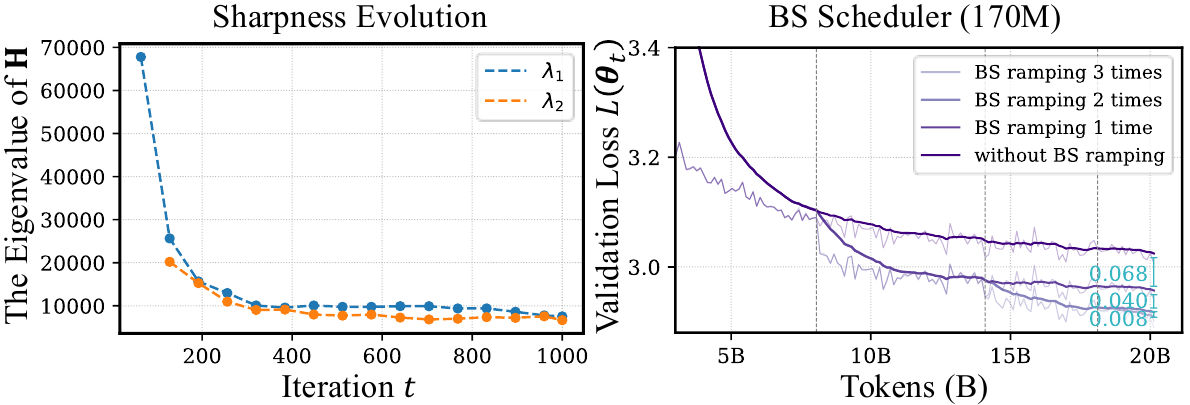}
    \caption{
    \textbf{Extensive to Adam-mini optimizer.}
    \textbf{(Left). Early sharp-to-flat dynamics.}
    Evolution of the top eigenvalues of the Hessian across iterations: $\lambda_i(\mathbf{H}(\btheta_t))$ vs. iteration $t$.
    \textbf{(Right). BS scheduling improves data efficiency.}
    \textbf{BS scheduling improves data efficiency.}
    We use a BS schedule that starts at $0.49$M and increases by $4\times$ at each ramp. 
    Models are trained with $1, 2$, or $3$ ramping steps, while models without ramping serve as the baseline. 
    Vertical gray dashed lines indicate ramping positions.
    }
    \label{fig:adam_mini}
    \vspace{-5pt}
\end{figure}

\begin{figure}[tb!]
    \centering
    \includegraphics[width=0.81\textwidth]{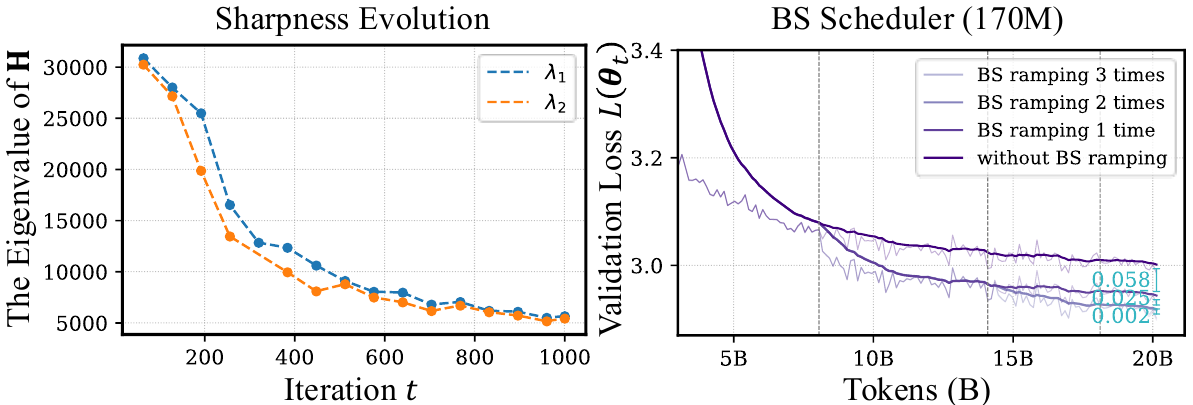}
    \caption{
    \textbf{Extensive to Lion optimizer.}
    \textbf{(Left). Early sharp-to-flat dynamics.}
    Evolution of the top eigenvalues of the Hessian across iterations: $\lambda_i(\mathbf{H}(\btheta_t))$ vs. iteration $t$.
    \textbf{(Right). BS scheduling improves data efficiency.}
    \textbf{BS scheduling improves data efficiency.}
    We use a BS schedule that starts at $0.49$M and increases by $4\times$ at each ramp. 
    Models are trained with $1, 2$, or $3$ ramping steps, while models without ramping serve as the baseline. 
    Vertical gray dashed lines indicate ramping positions.
    }
    \label{fig:lion}
    \vspace{-5pt}
\end{figure}

\textbf{Extension to Other Optimizers.}
See \cref{fig:adam_mini} for Adam-mini, and see \cref{fig:lion} for Lion.

\begin{figure}[tb!]
    \centering
    \includegraphics[width=0.81\textwidth]{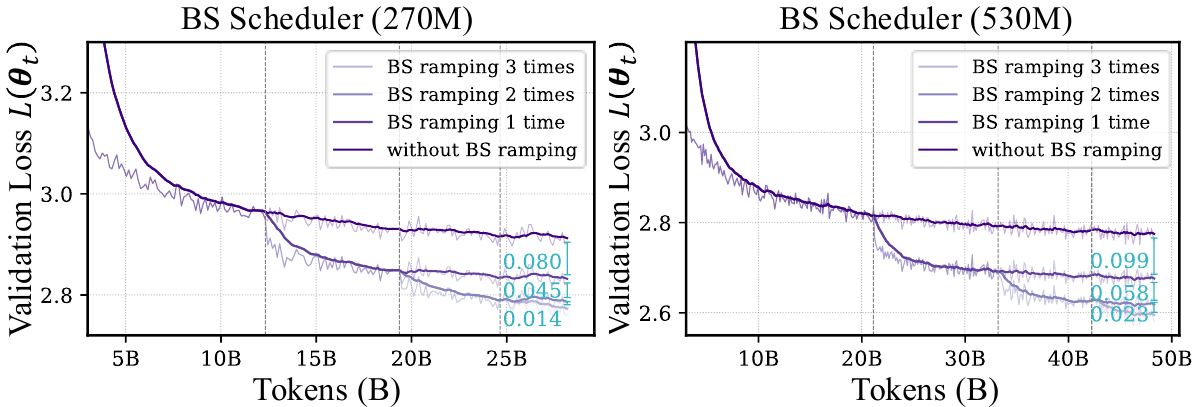}
    \caption{
    \textbf{Extensive to Larger Scale.}
    \textbf{(Left). Early sharp-to-flat dynamics.}
    Evolution of the top eigenvalues of the Hessian across iterations: $\lambda_i(\mathbf{H}(\btheta_t))$ vs. iteration $t$.
    \textbf{(Right). BS scheduling improves data efficiency.}
    \textbf{BS scheduling improves data efficiency.}
    We use a BS schedule that starts at $0.49$M and increases by $4\times$ at each ramp. 
    Models are trained with $1, 2$, or $3$ ramping steps, while models without ramping serve as the baseline. 
    Vertical gray dashed lines indicate ramping positions.
    }
    \label{fig:larger}
    \vspace{-5pt}
\end{figure}

\textbf{Extension to Larger Models.}
See \cref{fig:larger} for models with 270M and 530M parameters.

\subsection{Ablation Studies on Early Instabilities.}

\begin{figure}[tb!]
    \centering
    \includegraphics[width=0.81\textwidth]{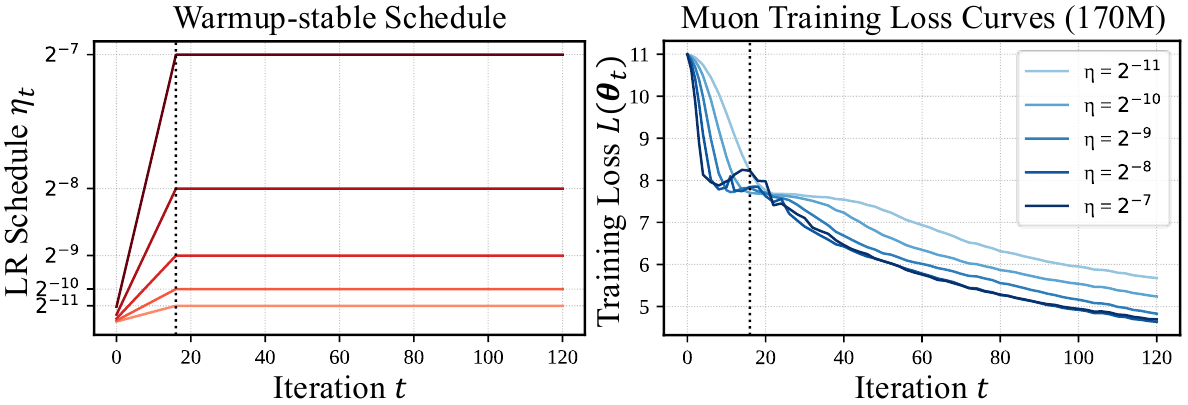}
    \caption{
    \textbf{Muon consistently shows loss spikes and plateaus early in training.}
    We train a series of \texttt{LLaMA-2} models with $170$M parameters.
    We adopt a warmup-stable schedule, where the warmup length is shortened to $16$ iterations and the peak LR is varied, $\eta \in \{2^{-11}, 2^{-10}, 2^{-9}, 2^{-8}, 2^{-7}\}$. 
    \textbf{(Left).} LR schedule: $\eta_t$ vs. training iteration $t$. 
    \textbf{(Middle, Right).} Training loss curves for different model sizes: $L(\btheta_t)$ vs. training iteration $t$. 
    The vertical dashed line marks the end of the warmup phase.
    }
    \label{fig:muon}
    \vspace{-5pt}
\end{figure}

\begin{figure}[tb!]
    \centering
    \includegraphics[width=0.81\textwidth]{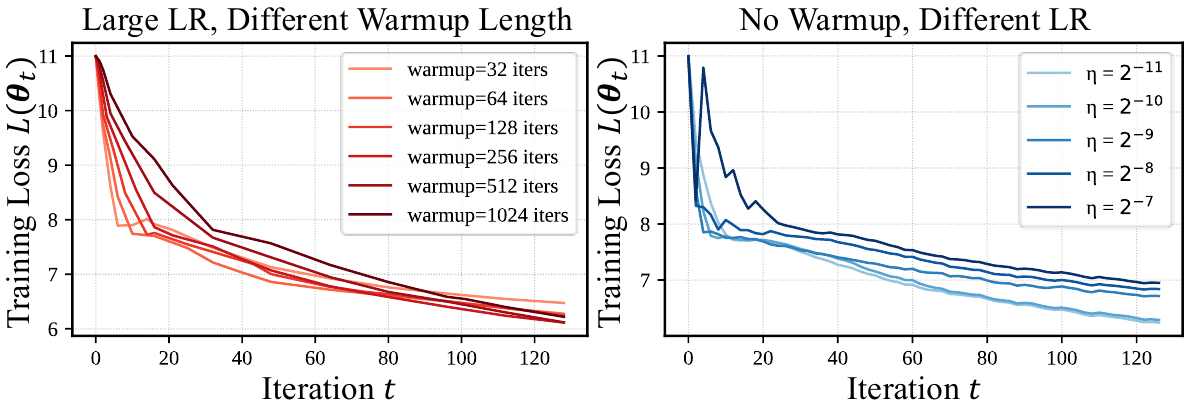}
    \caption{
    \textbf{(Left.) Shorter warmup, more loss spikes.}
    We train a series of \texttt{LLaMA-2} models with $170$M parameters.
    We adopt a warmup-stable schedule, where the warmup length varies from $\{2^{5}, 2^{6}, 2^{7}, 2^{8}, 2^{9}, 2^{10}\}$ iterations  and the peak LR is fixed $\eta = 2^{-7}$. 
    \textbf{(Right). Zero warmup and small BS  leads to larger loss spikes.} 
    We train a series of \texttt{LLaMA-2} models with $170$M parameters.
    We adopt a constant LR schedule, where no warmup and the peak LR is varied, $\eta \in \{2^{-11}, 2^{-10}, 2^{-9}, 2^{-8}, 2^{-7}\}$. 
    We also use the $0.49$M BS.
    }
    \label{fig:warmup}
    \vspace{-5pt}
\end{figure}

In this section, we conduct ablation studies on the root cause of instabilities, such as loss spikes and plateaus, observed in early training.

\textbf{Is it the unstable optimizer?}
To disentangle optimizer-induced instability from landscape-induced instability, we repeated the experiments using Muon, a substantially more stable optimizer than AdamW.
In \cref{fig:muon}, the loss spikes and plateaus consistently occurs under Muon when warmup is shortened or the peak LR is increased. This rules out the possibility that the behavior stems from AdamW’s startup issues.

\textbf{What if we use longer warmup?}
We vary only the warmup length while fixing the peak LR at $2^{-7}$.
In \cref{fig:warmup}~(left), shorter warmup lengths lead to higher possibilities of loss spikes
This behavior is consistent with the sharp-to-flat dynamics. Early in training, the model resides in sharper regions of the landscape, where only sufficiently small LRs ensure stable updates. Therefore, warmup is needed to gradually increase the LR until the trajectory enters flatter regions that can tolerate larger LRs.

\textbf{What if zero warmup and small BS?}
We also conduct experiments with no warmup and small batch size. 
In \cref{fig:warmup}~(right), the loss spikes become even more significant. This aligns with our explanation: with no warmup, the LR jumps immediately to a large value while sharpness is still extremely high, causing a spike almost at initialization. 
More importantly, small BS does not replace warmup, and the instability still appears because of the sharpness.

\end{document}